\documentclass[letterpaper]{article}
\usepackage{amsmath}
\usepackage{iclr2022_conference,times}

\usepackage[utf8]{inputenc} 
\usepackage[T1]{fontenc}    
\usepackage{hyperref}       
\usepackage{url}            
\usepackage{booktabs}       
\usepackage{amsfonts}       
\usepackage{nicefrac}       
\usepackage{microtype}      
\usepackage{xcolor}         

\usepackage{multirow}
\usepackage{diagbox}
\usepackage{footnote}

\newcommand{\weight}{\omega}
\newcommand{\Sim}{\textnormal{\texttt{sim}}}
\newcommand{\vc}{\textnormal{\texttt{vc}}}
\newcommand{\weights}{{\boldsymbol{\omega}}}
\renewcommand{\hat}{\widehat}
\renewcommand{\tilde}{\widetilde}
\newcommand{\NAMEALG}[0]{\texttt{TAWT}}
\newcommand{\dist}{\mathtt{dist}}

\usepackage{custom_tex}


\usepackage{amsmath,amsfonts,bm}









\def\eqref#1{equation~\ref{#1}}









\def\1{\bm{1}}








\def\vc{{\bm{c}}}



\DeclareMathAlphabet{\mathsfit}{\encodingdefault}{\sfdefault}{m}{sl}
\SetMathAlphabet{\mathsfit}{bold}{\encodingdefault}{\sfdefault}{bx}{n}















\usepackage{letltxmacro}
\LetLtxMacro{\originaleqref}{\eqref}
\renewcommand{\eqref}[1]{(\ref{#1})}

\title{Weighted Training for Cross-Task Learning}

\author{Shuxiao Chen \\
University of Pennsylvania \\
\texttt{shuxiaoc@wharton.upenn.edu} 
\And
Koby Crammer \\
The Technion \\
\texttt{koby@ee.technion.ac.il} 
\And
Hangfeng He \\
University of Pennsylvania \\
\texttt{hangfeng@seas.upenn.edu} 
\And
Dan Roth \\
University of Pennsylvania \\
\texttt{danroth@seas.upenn.edu} 
\And
Weijie J.~Su \\
University of Pennsylvania \\
\texttt{suw@wharton.upenn.edu}
}

%

\iclrfinalcopy 
\begin{document}

\maketitle

\begin{abstract}
In this paper, we introduce {\bf T}arget-{\bf A}ware {\bf W}eighted {\bf T}raining (\NAMEALG),
a weighted training algorithm for cross-task learning based on minimizing a representation-based task distance between the source and target tasks. 
We show that \NAMEALG~is easy to implement, is computationally efficient, requires little hyperparameter tuning, and enjoys non-asymptotic learning-theoretic guarantees.
The effectiveness of \NAMEALG{} is corroborated 
through extensive experiments with BERT on four sequence tagging tasks in natural language processing (NLP), including part-of-speech (PoS) tagging, chunking, predicate detection, and named entity recognition (NER). As a byproduct, the proposed representation-based task distance allows one to reason in a theoretically principled way about several critical aspects of cross-task learning, such as the choice of the source data and the impact of fine-tuning.\footnote{Our code is publicly available at \url{http://cogcomp.org/page/publication_view/963
}.}
\end{abstract}

\section{Introduction}

The state-of-the-art (SOTA) models in real-world applications rely increasingly on the usage of weak supervision signals \citep{pennington2014glove, devlin2019bert, liu2019multi}. Among these, \emph{cross-task} signals are one of the most widely-used weak signals \citep{zamir2018taskonomy, mccann2018natural}. Despite their popularity, the benefits of cross-task signals are not well understood from a theoretical point of view, especially in the context of deep learning \citep{he2021foreseeing, NEURIPS2020_0607f4c7}, hence impeding the efficient usage of those signals. Previous work has adopted representation learning as a framework to understand the benefits of cross-task signals, where knowledge transfer is achieved by learning a representation shared across different tasks \citep{baxter2000model, maurer2016benefit, tripuraneni2020theory, tripuraneni2021provable, du2021fewshot}. However, the existence of a shared representation is often too strong an assumption in practice. Such an assumption also makes it difficult to reason about several critical aspects of cross-task learning, such as the quantification of the value of the source data and the impact of fine-tuning \citep{kalan2020minimax, chua2021fine}. 

In this paper, we propose {\bf T}arget-{\bf A}ware {\bf W}eighted {\bf T}raining (\NAMEALG), a weighted training algorithm for efficient cross-task learning. The algorithm can be easily applied to existing cross-task learning paradigms, such as pre-training and joint training, to boost their \textit{sample efficiency} by assigning adaptive (i.e., trainable) weights on the source tasks or source samples. The weights are determined in a theoretically principled way by minimizing a representation-based task distance between the source and target tasks. Such a strategy is in sharp contrast to other weighting schemes common in machine learning, such as importance sampling in domain adaptation \citep{shimodaira2000improving, cortes2010learning, jiang2007instance}.

The effectiveness of \NAMEALG{} is verified via both theoretical analyses and empirical experiments. Using empirical process theory, we prove a non-asymptotic generalization bound for \NAMEALG{}. The bound is a superposition of two vanishing terms and a term depending on the task distance, the latter of which is potentially negligible due to the re-weighting operation. We then conduct comprehensive experiments on four sequence tagging tasks in NLP: part-of-speech (PoS) tagging, chunking, predicate detection, and named entity recognition (NER). We demonstrate that \NAMEALG{} further improves the performance of BERT \citep{devlin2019bert} in both pre-training and joint training for cross-task learning with limited target data, achieving an average absolute improvement of $3.1\%$ on the performance. 

As a byproduct, we propose a representation-based task distance that depends on the quality of representations for each task, respectively, instead of assuming the existence of a single shared representation among all tasks. This finer-grained notion of task distance enables a better understanding of cross-task signals. For example, the representation-based task distance gives an interpretable measure of the value of the source data on the target task based on the discrepancy between their optimal representations. Such a measure is more informative than measuring the difference between tasks via the discrepancy of their task-specific functions (e.g. linear functions) as done in previous theoretical frameworks \citep{tripuraneni2020theory}. Furthermore, the representation-based task distance clearly conveys the necessity of fine-tuning: if this distance is non-zero, then fine-tuning the representation becomes necessary as the representation learned from the source data does not converge to the optimal target representation.

Finally, we compare our work with some recent attempts in similar directions. \citet{liu2020meta} analyze the benefits of transfer learning by distinguishing source-specific features and transferable features in the source data. Based on the two types of features, they further propose a meta representation learning algorithm to encourage learning transferable and generalizable features. Instead of focusing on the distinction between two types of features, our algorithm and analyses are based on the representation-based task distance and are thus different. \citet{chua2021fine} present a theoretical framework for analyzing representations derived from model agnostic meta-learning \citep{finn2017model}, assuming all the tasks use approximately the same underlying representation. In contrast, we do not impose any a priori assumption on the proximity among source and target representations, and our algorithm seeks for a weighting scheme to maximize the proximity. Our work is also different from task weighting in curriculum learning. This line of work tends to learn suitable weights in the stochastic policy to decide which task to study next in curriculum learning \citep{graves2017automated}, while \NAMEALG{} aims to learn better representations by assigning more suitable weights on source tasks. Compared to heuristic weighting strategies in multi-task learning \citep{gong2019comparison, zhang2021survey}, we aim to design a practical algorithm with theoretical guarantees for cross-task learning.

\section{\NAMEALG: Target-Aware Weighted Training}\label{sec:alg}

\subsection{Preliminaries}
Suppose we have $T$ source tasks, represented by a collection of probability distributions $\{\calD_t\}_{t=1}^T$ on the sample space $\calX\times \calY$, where $\calX\subseteq \bbR^d$ is the feature space and $\calY \subseteq \bbR$ is the label space. For classification problems, we take $\calY$ to be a finite subset of $\bbR$.
We have a single target task, whose probability distribution is denoted as $\calD_0$.
For the $t$-th task, where $t = 0, 1, \hdots, T$, we observe $n_t$ i.i.d.~samples $S_t = \{(\bsx_{ti}, y_{ti})\}_{i=1}^{n_t}$ from $\calD_t$.
Typically, the number of samples from the target task, $n_0$, is much smaller than the samples from the source tasks, and the goal is to use samples from source tasks to aid the learning of the target task.


Let $\Phi$ be a collection of \emph{representations} from the feature space $\calX$ to some latent space $\calZ \subseteq \bbR^r$. 
We refer to $\Phi$ as the \emph{representation class}.
Let $\calF$ be a collection of \emph{task-specific functions} from the latent space $\calZ$ to the label space $\calY$. 
The complexity of the representation class $\Phi$ is usually much larger (i.e., more expressive) than that of the task-specific function class $\calF$.

Given a bounded loss function $\ell: \calY\times \calY \to [0, 1]$, the optimal pair of representation and task-specific function of the $t$-th task is given by
\begin{equation}
	\label{eq:opt_enc_dec_pair}
	(\phi^\star_t, f^\star_t) \in \argmin_{\phi_t\in\Phi, f_t\in\calF} \calL_t(\phi_t, f_t), \qquad \calL_t(\phi_t, f_t) :=  \bbE_{(X, Y)\sim\calD_t} [\ell(f_t\circ\phi_t(X), Y)].
\end{equation}
Note that in general, the optimal representations of different tasks are different. For brevity, all proofs for the theory part are deferred to the Appx. \ref{sec:appx-theory}.

\subsection{Derivation of \NAMEALG}
Under the assumption that the optimal representations $\{\phi^\star_t\}_{t=0}^T$ are similar, a representation learned using samples only from the source tasks would perform reasonably well on the target task. 
Consequently, we can devote $n_0$ samples from the target task to learn only the task specific function. This is a much easier task, since the complexity of $\calF$ is typically much smaller than that of $\Phi$.

This discussion leads to a simple yet immensely popular two-step procedure as follows \citep{tripuraneni2020theory, du2021fewshot}.
First, we solve a weighted empirical risk minimization problem with respect to the source tasks:
\begin{equation}
	\label{eq:representation_learning}
	(\hat \phi, \{\hat f_t\}_{t=1}^T)  \in \argmin_{\phi\in \Phi, \{f_t\} \subset \calF} \sum_{t=1}^T \weight_t \hat\calL_t(\phi, f_t), 
	\qquad
	\hat\calL_t(\phi, f_t) := \frac{1}{n_t} \sum_{i=1}^{n_t} \ell(f_t\circ\phi(\bsx_{ti}), y_{ti}),
\end{equation}
where $\weights \in \Delta^{T-1}$ is a user-specified vector lying in the $T$-dimensional probability simplex (i.e., $\sum_{t=1}^T \weight_t = 1$ and $\weight_t \geq 0, \forall 1\leq t\leq T$).
In the second stage, we freeze the representation $\hat\phi$, and seek the task-specific function that minimizes the empirical risk with respect to the target task:
\begin{equation}
	\label{eq:representation_transfer}
	\hat f_0 \in \argmin_{f_0\in\calF} \hat\calL_0(\hat\phi, f_0).
\end{equation}
In practice, 
we can allow $\hat\phi$ to slightly vary (e.g., via fine-tuning) to get a performance boost. 
In the two-step procedure \eqref{eq:representation_learning}--\eqref{eq:representation_transfer}, the weight vector $\weights$ is usually taken to be a hyperparameter and is \emph{fixed} during training. Popular choices
include the uniform weights (i.e., $\weight_t = 1/T$) or weights proportional to the sample sizes (i.e., $\weight_t = n_t/\sum_{t'=1}^T n_{t'}$)  \citep{liu2019multi, johnson2019survey}. 
This reveals the \emph{target-agnostic} nature of the two-step procedure \eqref{eq:representation_learning}--\eqref{eq:representation_transfer}: the weights stay the same regardless the level of proximity between the source tasks and the target task.

Consider the following thought experiment: if we know a priori that the first source task $\calD_1$ is closer (compared to other source tasks) to the target task $\calD_0$, then we would expect a better performance by raising the importance of $\calD_1$, i.e., make $\weight_1$ larger. 
This thought experiment motivates a \emph{target-aware} procedure that adaptively adjusts the weights based on the proximity of source tasks to the target.
A natural attempt for developing such a task-aware procedure is a follows:
\begin{align}
	\label{eq:alg_constrained_ver}
	\tag{OPT1}
	\min_{\substack{\phi\in\Phi, f_0\in \calF, \\  \weights \in \Delta^{T-1}}} \hat\calL_0(\phi, f_0) 
	\qquad  \textnormal{subject to } 
	\phi \in \argmin_{\psi\in\Phi} \min_{\{f_t\} \subset \calF} \sum_{t=1}^T \weight_t \hat\calL_t(\psi, f_t).
\end{align}
That is,
we seek for the best weights $\weights$ such that solving \eqref{eq:representation_learning} with this choice of $\weights$ would lead to the lowest training error when we subsequently solve \eqref{eq:representation_transfer}.

Despite its conceptual simplicity, the formulation \eqref{eq:alg_constrained_ver} is a complicated constrained optimization problem. 
Nevertheless, we demonstrate that it is possible to transform it into an unconstrained form for which a customized gradient-based optimizer could be applied.
To do so, we let $(\phi^\weights, \{f_t^\weights\})$ be any representation and task-specific functions that minimizes 
$
	\sum_{t=1}^T \weight_t \hat\calL_t(\phi, f_t)
$
over $\phi\in\Phi$ and $\{f_t\}\subset \calF$.
Equivalently, $\phi^\weights$ minimizes
$
	\sum_{t=1}^T \weight_t \min_{f_t \in \calF} \hat\calL_t(\phi, f_t)
$
over $\phi \in \Phi$.
With such notations, we can re-write \eqref{eq:alg_constrained_ver} as 
\begin{equation}
	\label{eq:alg_unconstrained_ver}
	\tag{OPT2}
	\min_{f_0\in\calF, \weights \in \Delta^{T-1}} \hat \calL_0(\phi^\weights, f_0).
\end{equation}
The gradient of the above objective with respect to the task-specific function, $\nabla_f \hat\calL_0(\phi^\weights, f_0)$, is easy to calculate via back-propagation. 
The calculation of the gradient with respect to the weights requires more work, as $\phi^\weights$ is an implicit function of $\weights$.
By the chain rule, we have
$
	\frac{\partial}{\partial \weight_t} \hat\calL_0(\phi^\weights, f_0) = [\nabla_\phi \hat\calL_0(\phi^\weights, f_0)]^\top \frac{\partial}{\partial \weight_t} \phi^\weights.
$
Since $\phi^\weights$ is a minimizer of $\phi\mapsto \sum_{t=1}^T \weight_t \min_{f_t \in \calF} \hat\calL_t(\phi, f_t)$, we have
\begin{equation}
	\label{eq:implicit_function}
	F(\phi^\weights, \weights)= 0, ~ \forall \weights \in \Delta^{T-1}, \qquad F(\phi, \weights):= \nabla_\phi \sum_{t=1}^T \weight_t \min_{f_t\in\calF} \hat\calL_t(\phi, f_t).
\end{equation}
By implicit function theorem, if $F(\cdot, \cdot)$ is everywhere differentiable 
and the matrix $\partial F(\phi, \weights)/\partial \phi$ is invertible for any $(\phi, \weights)$ near some $(\tilde\phi, \tilde \weights)$ satisfying $F(\tilde\phi, \tilde \weights) = 0$, then we can conclude that the map $\weights \mapsto \phi^\weights$ is a locally well-defined function near $\tilde \weights$, and the derivative of this map is given by
\begin{equation}
	\label{eq:implicit_function_gradient_wrt_weights}
	\frac{\partial}{\partial \weight_t} \phi^\weights 
	= - \bigg(\frac{\partial F(\phi, \weights)}{\partial \phi}\bigg|_{\phi = \phi^\weights}\bigg)^{-1}  \bigg( \frac{\partial F(\phi, \weights)}{\partial \weight_t}\bigg|_{\phi = \phi^\weights}\bigg).
\end{equation}
To simplify the above expression, note that under regularity conditions,
we can regard $\nabla_\phi \hat\calL_t(\phi, f_t^\weights)$ as a sub-gradient of the map $\phi\mapsto \min_{f_t\in\calF} \hat\calL_t(\phi, f_t)$. This means that we can write 
$
	F(\phi^\weights, \weights) = \sum_{t=1}^T \weight_t \nabla_\phi \hat \calL_t(\phi^\weights, f_t^\weights).
$
Plugging this expression back to \eqref{eq:implicit_function_gradient_wrt_weights} and recalling the expression for $\partial \hat\calL_0(\phi^\weights, f_0)/\partial \weight_t$ derived via the chain rule, we get 
\begin{equation}
	\frac{\partial}{\partial \weight_t} \hat\calL_0(\phi^\weights, f_0) =  - [\nabla_\phi \hat\calL_0(\phi^\weights, f_0)]^\top  
	\Big[\sum_{t=1}^T \weight_t \nabla_\phi^2 \hat\calL_t(\phi^\weights, f_t^\weights)\Big]^{-1} 
	[\nabla_\phi \hat\calL_t(\phi^\weights, f_t^\weights)].
\end{equation}
\begin{algorithm}[t!] 
\SetAlgoLined
\caption{{\bf T}arget-{\bf A}ware {\bf W}eighted {\bf T}raining (\NAMEALG)} 
\vspace*{0.12 cm}
\KwIn{Datasets $\{S_t\}_{t=0}^T$.}
\KwOut{Final pair of representation and task-specific function $(\hat \phi, \hat f_0)$ for the target task.}
Initialize parameters $\weights^0\in\Delta^{T-1}, \phi^0 \in \Phi, \{f_t^0\}_{t=0}^T \subset \calF$\;
\For{$k = 0,  \hdots, K-1$}{
Starting from $(\phi^k, \{f^k_t\}_{t=1}^T)$, run a few steps of SGD 
to get $(\phi^{k+1}, \{f_t^{k+1}\}_{t=1}^T)$\;
Use the approximate gradient $\nabla_f \hat\calL_0(\phi^{k+1}, f_0)$ to run a few steps of SGD from $f_0^k$ to get $f_0^{k+1}$\;
Run one step of approximate mirror descent \eqref{eq:task_gradient}--\eqref{eq:mirror_descent} from $\weights^k$ to get $\weights^{k+1}$\;
}
\Return{$\hat \phi = \phi^{K}, \hat f_0 = f_0^{K}$}
\label{alg}
\end{algorithm}
Now that we have the expressions for the gradients of $\hat\calL_0(\phi^\weights, f_0)$ with respect to $f_0$ and $\weights$, we can solve \eqref{eq:alg_unconstrained_ver} via a combination of alternating minimization and mirror descent. 
To be more specific, suppose that at iteration $k$, the current weights, representation, and task-specific functions are $\weights^k, \phi^k,$ and $\{f^k_t\}_{t=0}^T$, respectively. At this iteration, we conduct the following three steps:
\begin{enumerate}[leftmargin=0.5cm]
	\item Freeze $\weights^k$. Starting from $(\phi^k, \{f^k_t\}_{t=1}^T)$, run a few steps of SGD on the objective function $(\phi, \{f_t\}_{t=1}^T) \mapsto \sum_{t=1}^T \weight_t^k \hat\calL_t(\phi, f_t)$ to get $(\phi^{k+1}, \{f_t^{k+1}\}_{t=1}^T)$, which is regarded as an approximation of $(\phi^{\weights^k}, \{f_t^{\weights^k}\}_{t=1}^T)$;
	\item Freeze $(\phi^{k+1}, \{f_t^{k+1}\}_{t=1}^T)$. Approximate the gradient $\nabla_f \hat\calL_0(\phi^{\weights^{k}}, f_0)$ by $\nabla_f \hat\calL_0(\phi^{k+1}, f_0)$. Using this approximate gradient, run a few steps of SGD from $f_0^k$ to get $f_0^{k+1}$;
	\item Freeze $(\phi^{k+1}, \{f_t^{k+1}\}_{t=0}^T)$. Approximate the partial derivative $\partial \hat\calL_0(\phi^{\weights^k}, f^{k+1}_0)/\partial \weight_t$ by 
	\begin{equation}
	    \label{eq:task_gradient}
		g^k_t := - [\nabla_\phi \hat\calL_0(\phi^{k+1}, f_0^{k+1})]^\top  
		\Big[\sum_{t=1}^T \weight_t \nabla_\phi^2 \hat\calL_t(\phi^{k+1}, f_t^{k+1})\Big]^{-1} 
		[\nabla_\phi \hat\calL_t(\phi^{k+1}, f_t^{k+1})].		
	\end{equation}
	Then run one step of mirror descent (with step size $\eta^k$) from $\weights^{k}$ to get $\weights^{k+1}$:
	\begin{equation}
		\label{eq:mirror_descent}
		\weight_t^{k+1} = \frac{\weight_t^k \exp\{-\eta^k g^{k}_t\} }{\sum_{t'=1}^T \weight_{t'}^{k}\exp\{-\eta^k g^{k}_{t'}\}}.
	\end{equation}
\end{enumerate}

We use mirror descent in \eqref{eq:mirror_descent}, as it is a canonical generalization of Euclidean gradient descent to gradient descent on the probability simplex \citep{beck2003mirror}. Note that other optimization methods, such as projected gradient descent, can also be used here. The update rule \eqref{eq:mirror_descent} has a rather intuitive explanation. 
Note that $g^{k}_t$ is a weighted dissimilarity measure between the gradients $\nabla_\phi \hat \calL_0$ and $\nabla_\phi \hat\calL_t$. 
This can further be regarded as a crude dissimilarity measure between the optimal representations of the target task and the $t$-th source task. 
The mirror descent updates $\weight_t$ along the direction where the target task and the $t$-th source task are more similar.
The overall procedure is summarized in Algorithm \ref{alg}.

A faithful implementation of the above steps would require a costly evaluation of the inverse of the Hessian matrix $\sum_{t=1}^T \weight_t \nabla_\phi^2 \hat\calL_t(\phi^{k+1}, f_t^{k+1})\in\bbR^{r\times r}$. 
In practice, we can bypass this step by replacing\footnote{This type of approximation is common and almost necessary, such as MAML \citep{finn2017model}.} the Hessian-inverse-weighted dissimilarity measure \eqref{eq:task_gradient} with a consine-similarity-based dissimilarity measure (see Section \ref{sec:experiments} for details).

The previous derivation has focused on weighted pre-training, i.e., the target data is not used when defining the constrained set in \eqref{eq:alg_constrained_ver}. 
It can be modified, \emph{mutatis mutandis}, to handle weighted joint-training, where we change \eqref{eq:alg_constrained_ver} to
\begin{align}
	\min_{\substack{\phi\in\Phi, f_0\in \calF, \\  \weights \in \Delta^{T}}} \hat\calL_0(\phi, f_0) 
	\qquad  \textnormal{subject to } 
	\phi \in \argmin_{\psi\in\Phi} \min_{\{f_t\} \subset \calF} \sum_{t=0}^T \weight_t \hat\calL_t(\psi, f_t).
\end{align}
Compared to \eqref{eq:alg_constrained_ver}, we now also use the data from the target task when learning the representation $\phi$, and thus there is an extra weight $\weight_0$ on the target task.
The algorithm can also be easily extended to handle multiple target tasks or to put weights on samples (as opposed to putting weights on tasks). 
{The algorithm could also be applied to improve the efficiency of learning from cross-domain and cross-lingual signals, and we postpone such explorations for future work.}


\section{Theoretical Guarantees}\label{sec:theory}

\subsection{A Representation-Based Task Distance}\label{subsec:dist}

In this subsection, we introduce a {representation-based task distance}, which will be crucial in the theoretical understanding of \NAMEALG. 
To start with, let us define the representation and task-specific functions that are optimal in an ``$\weights$-weighted sense'' as follows:
\begin{equation}
	\label{eq:avg_opt_encder_and_decoders}
	(\bar \phi^\weights, \{\bar f_t^\weights\}_{t=1}^T) \in \argmin_{\phi\in\Phi, \{f_t\}\subset \calF} \sum_{t=1}^T \weight_t \calL_t(\phi, f_t) 
\end{equation}
Intuitively, $(\bar\phi^\weights, \{\bar f_t^\weights\})$ are optimal on the $\weights$-weighted source tasks when only a single representation is used.
Since $\bar\phi^\weights$ may not be unique, we introduce the function space $\bar\Phi^\weights \subset \Phi$ to collect all $\bar\phi^\weights$s that satisfy \eqref{eq:avg_opt_encder_and_decoders}.
To further simplify the notation, we let
$
	\calL^\star_t(\phi) := \min_{f_t\in\calF} \calL_t(\phi, f_t),
$
which stands for the risk incurred by the representation $\phi$ on the $t$-th task.
With the foregoing notation, we can write $\bar \phi^\weights \in \argmin_{\phi\in\Phi} \sum_{t=1}^T \weight_t \calL^\star_t(\phi)$.
The definition of the task distance is given below.
\begin{definition}[Representation-based task distance]
    \label{def:task_dist}
    The representation-based task distance between the $\weights$-weighted source tasks and the target task is defined as 
    \begin{equation}
        \label{eq:task_dist}
        \dist\Big(\sum_{t=1}^T \weight_t \calD_t, \calD_0\Big) :=  \sup_{\bar\phi^\weights \in \bar\Phi^\weights} \calL_0^\star(\bar\phi^\weights) - \calL_0^\star(\phi^\star_0),
    \end{equation} 
    where the supremum is taken over any $\bar\phi^\weights$ satisfying \eqref{eq:avg_opt_encder_and_decoders}, {and $\phi^\star_0$ is the optimal target representation}. 
\end{definition}
If all the tasks share the same optimal representation, 
then above distance becomes exactly zero.
Under such an assumption, the only source of discrepancy among tasks arises from the difference in their task-specific functions. 
This can be problematic in practice, as the task-specific functions alone are usually not expressive enough to describe the intrinsic difference among tasks.
In contrast, we relax the shared representation assumption substantially by allowing the optimal representations to differ and the distance to be non-zero.

The above notion of task distance also allows us to reason about certain important aspects of cross-task learning. For example, this task distance is asymmetric, capturing the asymmetric nature of cross-task learning.
Moreover, if the task distance is non-zero, then fine-tuning the representation becomes necessary, because solving \eqref{eq:representation_learning} alone would gives a representation that does not converge to the correct target $\phi_0^\star$. In addition, an empirical illustration of the representation-based task distance can be found in Fig. \ref{fig:illustration} in Appx. \ref{sec:appx-exp}.

This task distance can be naturally estimated from the data by replacing all population quantities with their empirical version. Indeed, minimizing the estimated task distance over the weights is equivalent to the optimization formulation \eqref{eq:alg_constrained_ver} of our algorithm (see Appx.~\ref{appx:tawt_and_dist_minimization} for a detailed derivation). This observation gives an alternative and \textit{theoretically principled derivation} of \NAMEALG.

\subsection{Performance Guarantees for \NAMEALG}\label{subsec:alg_performance}

To give theoretical guarantees for \NAMEALG{}, we need a few standard technical assumptions. {The first one concerns the Lipschitzness of the function classes, and the second one controls the complexity of the function classes via uniform entropy \citep{wellner2013weak} as follows.}

\begin{assump}[Lipschitzness]
	\label{assump:lip}
	The loss function $\ell: \calY\times\calY \to [0, 1]$ is $L_\ell$-Lipschitz in the first argument, uniformly over the second argument. 
	Any $f\in\calF$ is $L_\calF$-Lipschitz w.r.t. the $\ell_2$ norm.
\end{assump}
\begin{assump}[Uniform entropy control of function classes]
\label{assump:unif_entropy}
There exist $C_\Phi >0, \nu_\Phi>0$, such that for any probability measure $\bbQ_\calX$ on $\calX\subseteq \bbR^d$, we have
\begin{equation}
  \label{eq:dim_encoder}
  \calN(\Phi; L^2(\bbQ_\calX); \ep) \leq (C_\Phi/\ep)^{\nu_\Phi}, \qquad \forall \ep > 0,
\end{equation}
where $\calN(\Phi; L^2(\bbQ_\calX); \ep)$ is the $L^2(\bbQ_\calX)$ covering number of $\Phi$ (i.e., the minimum number of $L^2(\bbQ_\calX)$ balls\footnote{For two vector-valued functions $\phi, \psi\in\Phi$, their $L^2(\bbQ_\calX)$ distance is $\big({\int \|\phi(\bsx) - \psi(\bsx)\|^2 d\bbQ_\calX(\bsx)}\big)^{1/2}$.} 
with radius $\ep$ required to cover $\Phi$). 
In parallel, there exist $C_\calF>0, \nu_\calF>0$, such that for any probability measure $\bbQ_\calZ$ on $\calZ\subseteq\bbR^r$, we have 
\begin{equation}
  \label{eq:dim_decoder}
  \calN(\calF; L^2(\bbQ_\calZ); \ep) \leq (C_\calF/\ep)^{\nu_\calG}, \qquad \forall \ep > 0,
\end{equation}
where $\calN(\calF; L^2(\bbQ_\calZ); \ep)$ is the $L^2(\bbQ_\calZ)$ covering number of $\calF$. 
\end{assump}
Uniform entropy generalizes the notion of Vapnik-Chervonenkis dimension \citep{vapnik2013nature} and allows us to give a unified treatment of regression and classification problems.
For this reason, function classes satisfying the above assumption are also referred to as ``VC-type classes'' in the literature \citep{koltchinskii2006local}. 
In particular, if each coordinate of $\Phi$ has VC-subgraph dimension $\vc(\Phi)$, then \eqref{eq:dim_encoder} is satisfied with $\nu_\Phi = \Theta(r\cdot\vc(\Phi))$ (recall that $r$ is the dimension of the latent space $\calZ$). 
Similarly, if $\calF$ has VC-subgraph dimension $\vc(\calF)$, then \eqref{eq:dim_decoder} is satisfied with $\nu_\calF =\Theta(\vc(\calF))$. \

The following definition characterizes how ``transferable'' a representation $\phi$ is from the $\weights$-weighted source tasks to the target task.
\begin{definition}[Transferability]
\label{def:transferable}
A representation $\phi\in\Phi$ is $(\rho, C_\rho)$-transferable from $\weights$-weighted source tasks to the target task, if there exists $\rho>0, C_\rho>0$ such that for any $\bar \phi^\weights \in \bar\Phi^\weights$, we have
\begin{equation}
\label{eq:transferable}
	\calL_0^\star(\phi) - \calL_0^\star(\bar \phi^\weights ) \leq C_\rho \bigg(\sum_{t=1}^T \weight_t [\calL^\star_t(\phi) - \calL^\star_t(\bar \phi^\weights)]\bigg)^{1/\rho}.
\end{equation}
\end{definition}
Intuitively, the above definition says that relative to $\bar \phi^\weights$, the risk of $\phi$ on the target task can be controlled by a polynomial of the average risk of $\phi$ on the source tasks.
This can be regarded as an adaptation of the notions of transfer exponent and relative signal exponent \citep{hanneke2020value,cai2021transfer} originated from the transfer learning literature to the representation learning setting.
This can also be seen as a generalization of the task diversity assumption \citep{tripuraneni2020theory,du2021fewshot} to the case when the optimal representations $\{\phi^\star_t\}_{t=1}^T$ do not coincide. In addition, \citet{tripuraneni2020theory} and \citet{du2021fewshot} proved that $\rho=1$ in certain simple models and under some simplified setting. 

In this part, we prove a non-asymptotic generalization bound for \NAMEALG.
The fact that the weights are learned from the data substantially complicates the analysis.
To proceed further, we make a few simplifying assumptions.
First, we assume that the sample sizes of the source data are relatively balanced: there exists an integer $n$ such that $n_t = \Theta(n)$ for any $t\in\{1, \hdots, T\}$.
Meanwhile, instead of directly analyzing \eqref{eq:alg_constrained_ver}, we focus on its sample split version. 
In particular, we let $B_1 \cup B_2$ be a partition of $\{1, \hdots, n_0\}$, where $|B_1| = \Theta(|B_2|)$. 
Define 
$$
	\hat\calL^{(1)}_0(\phi, f_0) := \frac{1}{|B_1|} \sum_{i\in B_1} \ell(f_0\circ \phi(\bsx_{0i}), y_{0i}) , \qquad 
	\hat\calL^{(2)}_0(\phi, f_0) := \frac{1}{|B_2|} \sum_{i\in B_2} \ell(f_0\circ \phi(\bsx_{0i}), y_{0i}) .
$$
We first solve \eqref{eq:alg_constrained_ver} restricted to the first part of the data:
\begin{align}
	\label{eq:alg_constrained_ver_sample_split_1}
	(\hat \phi, \hat \weights ) \in \argmin_{\substack{\phi\in\Phi, \weights \in \Delta^{T-1}}} \min_{f_0\in \calF} \hat\calL_0^{(1)}(\phi, f_0) 
	\qquad  \textnormal{subject to } 
	\phi \in \argmin_{\psi\in\Phi} \min_{\{f_t\} \subset \calF} \sum_{t=1}^T \weight_t \hat\calL_t(\psi, f_t).
\end{align}
Then, we proceed by solving
\begin{align}
	\label{eq:alg_constrained_ver_sample_split_2}
	\hat f_0 \in \argmin_{f_0\in\calF} \hat \calL_0^{(2)} (\hat \phi, f_0).
\end{align}
Such a sample splitting ensures the independence of $\hat\phi$ and the second part of target data $B_2$, hence allowing for more transparent theoretical analyses. Such strategies are common in statistics and econometrics literature when the algorithm has delicate dependence structures (see, e.g., \citep{hansen2000sample,chernozhukov2018double}).
We emphasize that sample splitting is conducted only for theoretical convenience and is not used in practice.


The following theorem gives performance guarantees for the sample split version of \NAMEALG.
\begin{theorem}[Performance of \NAMEALG~with sample splitting]
\label{thm:ub_learned_weights}
Let $(\hat\phi, \hat f_0)$ be obtained via solving \eqref{eq:alg_constrained_ver_sample_split_1}--\eqref{eq:alg_constrained_ver_sample_split_2}.
Let Assumptions \ref{assump:lip} and \ref{assump:unif_entropy} hold. 
In addition, assume that the learned weights satisfy $\hat\weights\in\calW_\beta := \{\weights \in \Delta^{T-1}: \beta^{-1} \leq \weight_t/\weight_{t'} \leq \beta , \forall t\neq t'\}$, where $\beta\geq 1$ is an absolute constant. 
Fix $\delta\in(0, 1)$. 
There exists a constant $C = C(L_\ell, L_\calF, C_\Phi, C_\calF)>0$ such that the following holds: if for any weights $\weights\in\calW_\beta$ and any representation $\phi$ in a $C \beta\cdot \sqrt{({\nu_\Phi \log \delta^{-1}})/{{nT}} + ({\nu_\calF + \log T})/{n}}$-neighborhood of $\bar\Phi^\weights$\footnote{A representation $\phi$ is in an $\ep$-neighborhood of $\bar\Phi^\weights$ if $\sum_{t=1}^T \weight_t [\calL_t^\star(\phi) - \calL_t^\star(\bar \phi^\weights)] \leq \ep $.},
there exists a specific $\bar \phi^\weights\in\bar\Phi^\weights$ such that $\bar \phi^\weights$ is $(\rho , C_\rho)$-transferable, then there exists another $C' = C'(L_\ell, L_\calF, C_\Phi, C_\calF, C_\rho, \rho)$ such that with probability at least $1-\delta$, we have 
\begin{align}
	\calL_0(\hat\phi, \hat f_0) - \calL_0(\phi^\star_0, f_0^\star) & 
	\leq C' \bigg[\bigg(\frac{\nu_\calF + \log(1/\delta)}{n_0}\bigg)^{\frac{1}{2}} + \beta^{1/\rho}\bigg(\frac{\nu_\Phi+ \log(1/\delta)}{nT} + \frac{\nu_\calF + \log T}{n}\bigg)^{\frac{1}{2\rho}}\bigg]\nonumber\\
	\label{eq:ub_learned_weights}
	& \qquad + \dist\Big(\sum_{t=1}^T \hat\weight_t \calD_t, \calD_0\Big). 
\end{align}
\end{theorem}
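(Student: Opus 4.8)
The plan is to bound the excess target risk by the same three-term decomposition that underlies Theorem~\ref{thm:ub_fixed_weights}, but to carry it out \emph{uniformly} over the feasible weight set $\calW_\beta$ so that it applies to the data-dependent $\hat\weights$. First I would peel off the target task-specific estimation error by writing the telescoping identity
$$\calL_0(\hat\phi, \hat f_0) - \calL_0(\phi^\star_0, f_0^\star) = \big[\calL_0(\hat\phi, \hat f_0) - \calL^\star_0(\hat\phi)\big] + \big[\calL^\star_0(\hat\phi) - \calL^\star_0(\bar\phi^{\hat\weights})\big] + \big[\calL^\star_0(\bar\phi^{\hat\weights}) - \calL^\star_0(\phi^\star_0)\big].$$
The third bracket is at most $\dist(\sum_{t=1}^T \hat\weight_t\calD_t, \calD_0)$ by Definition~\ref{def:task_dist}, the supremum over $\bar\Phi^{\hat\weights}$ absorbing any non-uniqueness of $\bar\phi^{\hat\weights}$. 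The first bracket is where sample splitting pays off: since $\hat\phi$ depends only on $B_1$ and the source data, it is independent of $B_2$, so conditioning on $\hat\phi$ and running the standard basic-inequality argument for $\hat f_0 \in \argmin_{f_0} \hat\calL_0^{(2)}(\hat\phi, f_0)$ reduces the problem to a uniform deviation over $\calF$ alone (the composed class $\ell\circ f_0\circ\hat\phi$ indexed by $f_0\in\calF$). This yields the $\sqrt{(\nu_\calF + \log\delta^{-1})/n_0}$ term with no $\nu_\Phi$ contribution, which is precisely the motivation for the split.

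The middle bracket is controlled through transferability. Once one verifies that $\hat\phi$ lies in the stated neighborhood of $\bar\Phi^{\hat\weights}$, Definition~\ref{def:transferable} gives
$$\calL^\star_0(\hat\phi) - \calL^\star_0(\bar\phi^{\hat\weights}) \leq C_\rho\Big(\sum_{t=1}^T \hat\weight_t\big[\calL^\star_t(\hat\phi) - \calL^\star_t(\bar\phi^{\hat\weights})\big]\Big)^{1/\rho},$$
so the task reduces to bounding the weighted source-side excess risk inside the parentheses. Writing $\hat\calL^\star_t(\phi) := \min_{f_t\in\calF}\hat\calL_t(\phi, f_t)$, the constraint in \eqref{eq:alg_constrained_ver_sample_split_1} forces $\hat\phi \in \argmin_{\psi}\sum_{t=1}^T \hat\weight_t \hat\calL^\star_t(\psi)$; a basic inequality then replaces the empirical weighted risk of $\hat\phi$ by that of $\bar\phi^{\hat\weights}$ and leaves two empirical-process deviations, whose sum is bounded by the centered quantity
$$\sup_{\weights\in\calW_\beta}\ \sup_{\phi}\ \Big|\sum_{t=1}^T \weight_t\big[\big(\hat\calL^\star_t - \calL^\star_t\big)(\phi) - \big(\hat\calL^\star_t - \calL^\star_t\big)(\bar\phi^\weights)\big]\Big|.$$

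The hard part is this last uniform bound, and it is where the sharp rate $\nu_\Phi/(nT) + \nu_\calF/n$ must be produced. The subtlety is that the naive per-task split --- controlling $\sum_t \weight_t \sup_\phi|(\hat\calL^\star_t - \calL^\star_t)(\phi)|$ --- charges the full representation complexity $\nu_\Phi$ to each $\Theta(n)$-sample task and only gives the loose rate $\nu_\Phi/n$. To obtain $\nu_\Phi/(nT)$ I would instead keep $\phi$ shared and treat $\sum_t \weight_t \hat\calL_t$ as a single empirical process over the pooled $\Theta(nT)$ source samples, so that chaining over $\Phi$ sees the effective sample size $N_\weights = (\sum_{t=1}^T \weight_t^2/n_t)^{-1} = \Theta(nT)$; the balance constraint $\calW_\beta$ is exactly what guarantees $\weight_t = \Theta(1/T)$, so that $N_\weights = \Theta(nT)$ with the hidden constant scaling like $\beta^{-2}$, which after raising to the $1/2\rho$ power produces the $\beta^{1/\rho}$ prefactor. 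The $T$ separate task-specific functions each see only $\Theta(n)$ samples and contribute the $\nu_\calF/n$ term. Finally, uniformity over $\weights$ is obtained by exploiting the linearity of the process in $\weights$: after the shared-$\phi$ chaining bound, the residual per-task fluctuations are combined by a union bound over the $T$ tasks (equivalently, over the extreme points of $\calW_\beta$), which is the source of the additive $\log T$. Feeding the resulting high-probability bound back with $\weights = \hat\weights\in\calW_\beta$ both closes the middle-bracket estimate and simultaneously discharges the neighborhood hypothesis required to invoke transferability.
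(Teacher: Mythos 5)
Your proposal follows the paper's proof architecture essentially step for step: the same three-term decomposition; the same use of sample splitting so that the target-side deviation becomes an empirical process indexed by $\calF$ alone (conditioning on $\hat\phi$, which is independent of $B_2$), giving the $\sqrt{(\nu_\calF+\log(1/\delta))/n_0}$ term; the same basic-inequality reduction of the middle bracket to a centered weighted source process that must be controlled \emph{uniformly} over $\weights\in\calW_\beta$; the same pooled chaining over $\Phi$ at effective sample size $\Theta(nT/\beta^2)$ (the paper implements this via McDiarmid's inequality, symmetrization, and Dudley's entropy integral); and the same final step in which the high-probability source bound simultaneously discharges the neighborhood hypothesis and feeds Definition \ref{def:transferable} to produce the $\beta^{1/\rho}(\cdot)^{1/2\rho}$ term, with the task-distance term handled by the trivial supremum bound.

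The one step where you genuinely depart from the paper is the mechanism for uniformity over $\weights$, and there your argument as stated is incorrect, though repairably so. The paper does not use extreme points at all: it enlarges the index set of the empirical process to triples $(\phi,\{f_t\},\weights)$, shows that the natural pseudometric is controlled by the $(\phi,\{f_t\})$-part plus $T^2\|\weights-\tilde\weights\|^2$, and covers $\calW_\beta$ at scale $\ep/T$ with $(cT/\ep)^T$ points; the resulting $T\log T$ entropy, divided by the pooled sample size $nT$, is exactly what produces the additive $(\log T)/n$ inside the $1/2\rho$ power. Your union bound ``over the $T$ tasks (equivalently, over the extreme points of $\calW_\beta$)'' conflates two different sets: the simplex vertices $e_t$ do not even lie in $\calW_\beta$ (they violate the ratio constraints), and the true extreme points of $\calW_\beta$ are the two-valued weight vectors indexed by nonempty proper subsets $S\subset\{1,\dots,T\}$ of tasks receiving the high weight, of which there are $2^T-2$, not $T$. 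Moreover, the process you display is not linear in $\weights$, since $\bar\phi^\weights$ and $\bar f_t^\weights$ themselves vary with $\weights$; linearity becomes available only after you first bound the centered difference by twice $\sup_{\phi,\{f_t\}}\big|\sum_{t=1}^T\weight_t\big(\calL_t(\phi,f_t)-\hat\calL_t(\phi,f_t)\big)\big|$, which is convex in $\weights$ for fixed $(\phi,\{f_t\})$ and hence maximized at an extreme point. With both repairs your route does close, and in fact slightly improves on the paper: the union bound over the $2^T$ extreme points inflates $\log(1/\delta)$ by $\calO(T)$, which is absorbed into $T\nu_\calF$ (using $\nu_\calF\geq 1$, as the paper also does), yielding the stated bound even without the $\log T$ term. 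But the accounting you give---$\log T$ arising from a $T$-fold union bound---describes neither your (corrected) argument nor the paper's.
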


The upper bound in \eqref{eq:ub_learned_weights} is a superposition of three terms. Let us disregard the $\log(1/\delta)$ terms for now and focus on the dependence on the sample sizes and problem dimensions. The first term, which scales with $\sqrt{\nu_\calF/n_0}$, corresponds to the error of learning the task-specific function in the target task. 
This is unavoidable even if the optimal representation $\phi^\star_0$ is known. 

The second term that scales with $[(\nu_\Phi+ T\nu_\calF)/(nT)]^{1/2\rho}$ characterizes the error of learning the imperfect representation $\bar \phi^\weights$ from the source datasets and transferring the knowledge to the target task. 
Note that this term is typically much smaller than $\sqrt{(\nu_\Phi + \nu_\calF)/n_0}$, the error that would have been incurred when learning only from the target data, thus illustrating the potential benefits of representation learning.
This happens because $nT$ is typically much larger than $n_0$.

The third term is precisely the task distance introduced in Definition \ref{def:task_dist}.
The form of the task distance immediately demonstrates the possibility of ``matching'' $\bar\phi^\weights \approx \phi_0^\star$ via varying the weights $\weights$, under which case the third term would be negligible compared to the former two terms.
For example, in Appx. \ref{appx:exact_match}, we give a sufficient condition for exactly matching $\bar\phi^\weights = \phi^\star_0$.


The proof of Theorem \ref{thm:ub_learned_weights} is based on empirical process theory. Along the way, we also establish an interesting result on multi-task learning, where the goal is to improve the average performance for all tasks instead of target tasks (see Lemma \ref{lemma:err_learn_encoder} in Appx. \ref{sec:appx-theory}). The current analysis can also be extended to cases where multiple target tasks are present. 
\section{Experiments}

\label{sec:experiments}

In this section, we verify the effectiveness of \NAMEALG{} in extensive experiments using four NLP tasks, including PoS tagging, chunking, predicate detection, and NER. More details are in Appx. \ref{sec:appx-exp}.

{\bf Experimental settings.} In our experiments, we mainly use two widely-used NLP datasets, Ontontes 5.0 \citep{hovy2006ontonotes} and CoNLL-2000 \citep{tjong2000introduction}. Ontonotes 5.0 contains annotations for PoS tagging, predicate detection, and NER, and CoNLL-2000 is a shared task for chunking. There are about $116K$ sentences, $16K$ sentences, and $12K$ sentences in the training, development, and test sets for tasks in Ontonotes 5.0. As for CoNLL-2000, there are about $9K$ sentences and $2K$ sentences in the training and test sets. As for the evaluation metric, we use accuracy for PoS tagging, span-level F1 for chunking, word-level F1 for predicate detection, and span-level F1 for NER. We use BERT\footnote{While BERT is no longer the SOTA model, all SOTA models are slight improvements of BERT, so our experiments are done with highly competitive models.} \citep{devlin2019bert} as our basic model in our main experiments. Specifically, we use the pre-trained case-sensitive BERT-base PyTorch implementation \citep{wolf2020transformers}, and the common hyperparameters for BERT. In the BERT, the task-specific function is the last-layer linear classifier, and the representation model is the remaining part. As for cross-task learning paradigms, we consider two popular learning paradigms, pre-training, and joint training. Pre-training first pre-train the representation part on the source data and then fine-tune the whole target model on the target data. Joint training uses both source and target data to train the shared representation model and task-specific functions for both source and target tasks at the same time. As for the multi-task learning part in both pre-training and joint training, we adopt the same multi-task learning algorithm as in MT-DNN \citep{liu2019multi}. More explanation on the choice of experimental settings can be found in Appx. \ref{sec:setting-choice}.

\begin{table}[t]
\centering
\scalebox{0.85}{
\begin{tabular}{c||c|c|c|c||c}
\hline
 \diagbox[font=\footnotesize\itshape]{Learning Paradigm}{Target Task} & PoS & Chunking  & Predicate Detection & NER  & Avg \\ \hline
 Single-Task Learning & 34.37 & 43.05 & 66.26 & 33.20 & 44.22  \\ \hline \hline
  Pre-Training & 49.43 & 73.15 & 74.10 & 41.22 & 59.48  \\ 
 Weighted Pre-Training & {\bf 51.17} *** & {\bf 73.41} & {\bf 75.77} *** & {\bf 46.23} *** & {\bf 61.64} \\ \hline
 Joint Training & 53.83 & 75.58 & 75.42 & 43.50 &  62.08  \\ 
 Weighted Joint Training & {\bf 57.34} *** & {\bf 77.78} *** & {\bf 75.98} *** & {\bf 53.44} *** & {\bf 66.14} \\ \hline \hline
 Normalized Joint Training & 84.14 & 88.91 & {\bf 77.02} & 61.15 &  77.80 \\ 
 Weighted Normalized Joint Training & {\bf 86.07} *** & {\bf 90.62} *** &  76.67 & {\bf 63.44} *** & {\bf 79.20} \\ \hline
\end{tabular}}
\caption{The benefits of weighted training for different learning paradigms under different settings. There are four tasks in total, PoS tagging, chunking, predicate detection, and NER. For each setting, we choose one task as the target task and the remaining three tasks as source tasks. We randomly choose $9K$ training sentences for each source task respectively, because the training size of the chunking dataset is $8936$. As for the target task, we randomly choose $100$, $100$, $300$, $500$ training sentences for PoS tagging, chunking, predicate detection, and NER respectively, based on the difficulty of tasks. Single-task learning denotes learning only with the small target data. *** indicates the p-value of the paired sampled t-test is smaller than $0.001$. 
}
\label{table:results}
\end{table}

{\bf Settings for weighted training.} For scalability, in all the experiments, we approximate $g^k_t$ in Eq. \eqref{eq:task_gradient} by $-c \times \Sim(\nabla_\phi \hat\calL_0(\phi^{k+1}, f_0^{k+1}), \nabla_\phi \hat\calL_t(\phi^{k+1}, f_t^{k+1}))$, where $\Sim(\cdot, \cdot)$ denotes the cosine similarity between two vectors. Note that this type of approximation is common and almost necessary, such as MAML \citep{finn2017model}. For weighted joint training, we choose $c = 1$. For weighted pre-training, we choose the best $c$ among $[0.3, 1, 10, 30, 100]$, because the cosine similarity between the pre-training tasks and the target task is small in general. In practice, we further simplify the computation of $\nabla_\phi \hat\calL_t(\phi^{k+1}, f_t^{k+1})$ ($t = 0, 1, \hdots, T$) in Eq. \eqref{eq:task_gradient} by computing the gradient over the average loss of a randomly sampled subset of the training set instead of the average loss of the whole training set as in Eq. \eqref{eq:representation_learning}. In our experiments, we simply set the size of the randomly sampled subset of the training set as $64$, though a larger size is more beneficial in general. In our experiments, we choose $\eta^k = 1.0$ in the mirror descent update \eqref{eq:mirror_descent}. It is worthwhile to note that there is no need to tune any extra hyperparameters for weighted joint training, though we believe that the performance of \NAMEALG{} can be further improved by tuning extra hyper parameters, such as the learning rate $\eta^k$. 

{\bf Results.}  The effectiveness of \NAMEALG{} is first demonstrated for both pre-training and joint training on four tasks with quite a few training examples in the target data, as shown in Table \ref{table:results}. Experiments with more target training examples and some additional experiments can be found in Table \ref{table:large-results} and Table \ref{table:extra-weighted-joint-training} in Appx. \ref{sec:appx-exp}. Finally, we note that \NAMEALG{} can be easily extended from putting weights on tasks to putting weights on samples (see Fig. \ref{fig:weighted-sample} in Appx. \ref{sec:appx-exp}).

{\bf Normalized joint training.}  Inspired by the final task weights learned by \NAMEALG{} (see Table \ref{table:final-weights} in Appx. \ref{sec:appx-exp}), we also experiment with normalized joint training. The difference between joint training and normalized joint training lies in the initialization of task weights. For (weighted) joint training, the weights on tasks are initialized to be $\weight_t = n_t/\sum_{t'=1}^T n_{t'}$ (i.e, the weights on the loss of each example will be uniform), whereas for the (weighted) normalized joint training, the weights on tasks are initialized to be $\weight_t = 1/T$ (i.e., the weight on the loss of each example will be normalized by the sample sizes). Note that the loss for each task is already normalized by the sample size in our theoretical analysis (see Eq. \ref{eq:representation_learning}). As a byproduct, we find that normalized joint training is much better than the widely used joint training when the source data is much larger than the target data. In addition, \NAMEALG{} can still be used to further improve normalized joint training. The corresponding results can be found in Table \ref{table:results}. In addition, we find that dynamic weights might be a better choice in weighted training compared to fixed weights (see Table \ref{table:dynamic-weights} in Appx. \ref{sec:appx-exp}).

\begin{figure}[t]
	\centering
    \includegraphics[scale=0.43]{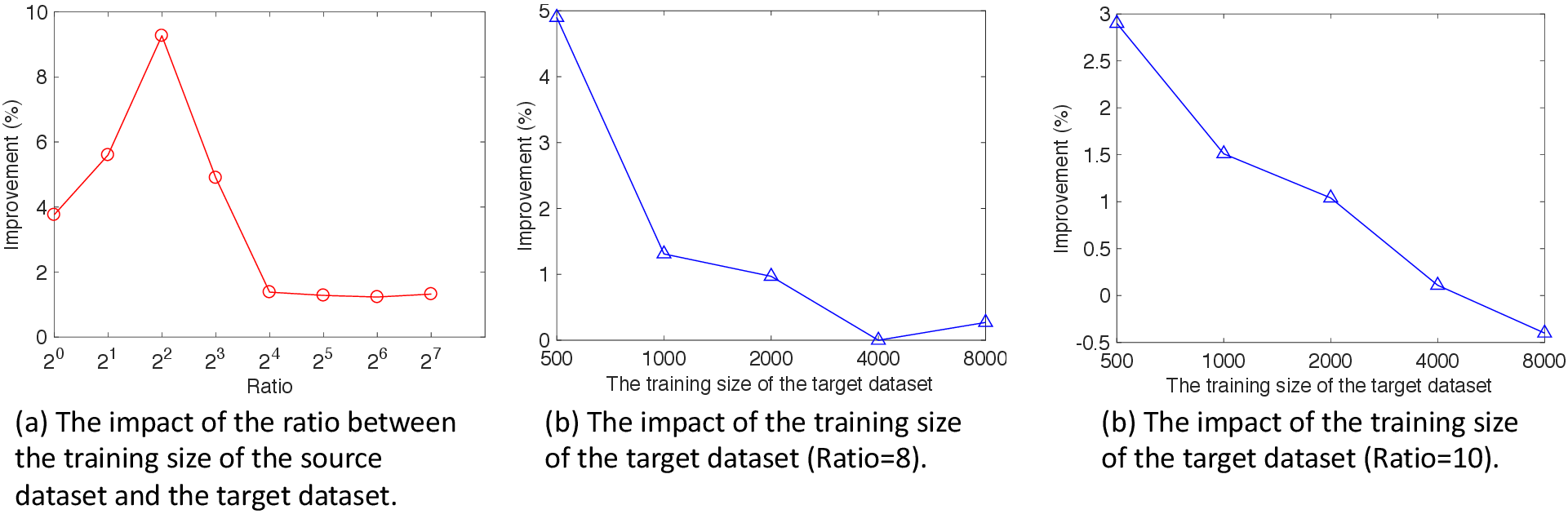}
    \caption{Analysis of the weighted training algorithms. We analyze the impact of two crucial factors on the improvement of the weighted training algorithms: the ratio between the training sizes of the source and target datasets, and the training size of the target dataset. In this figure, we use NER as the target task and source tasks include PoS tagging and predicate detection. In the first subfigure, we keep the training size of the target tasks as $500$ and change the ratio from $1$ to $128$ on a log scale. In the second (third) subfigure, we keep the ratio as $8$ ($10$) and change the training size of the target dataset from $500$ to $8000$ on a log scale. The corresponding improvement from the normalized joint training to the weighted normalized joint training is shown. 
    }
	\label{fig:analysis}
\end{figure}

{\bf Analysis.} Furthermore, we analyze two crucial factors (i.e., the ratio between the training sizes of the source and target datasets, and the training size of the target dataset) that affect the improvement of \NAMEALG{} in Fig. \ref{fig:analysis}. In general, we find that \NAMEALG{} is more beneficial when the performance of the base model is poorer, either because of a smaller-sized target data or due to a smaller ratio between the source data size and the target data size. More details are in Table \ref{table:ratio} and Table \ref{table:target-size} in Appx. \ref{sec:appx-exp}.
\section{Discussion}
\label{sec:discuss}

In this paper, we propose a new weighted training algorithm, \NAMEALG{}, to improve the \textit{sample efficiency} of learning from cross-task signals. 
\NAMEALG{} adaptively assigns weights on tasks or samples in the source data to minimize the representation-based task distance between source and target tasks. 
The algorithm is an easy-to-use plugin that can be applied to existing cross-task learning paradigms, such as pre-training and joint training, without introducing too much computational overhead and hyperparameters tuning. The effectiveness of \NAMEALG{} is further corroborated through theoretical analyses and empirical experiments. 
To the best of our knowledge, \NAMEALG{} is the first weighted algorithm for cross-task learning with theoretical guarantees, and the proposed representation-based task distance also sheds light on many critical aspects of cross-task learning.

{\bf Limitations and Future Work.} 
There are two main limitations in our work. First, although we gave an efficient implementation of task-weighted version of \NAMEALG{}, an efficient implementation of sample-weighted version of \NAMEALG{} is still lacking, and we leave it for future work. 
Second, we are not aware of any method that can efficiently estimate the representation-based task distance without training the model, and we plan to work more on this direction. In addition, we also plan to evaluate \NAMEALG{} in other settings, such as cross-domain and cross-lingual settings; in more general cases, such as multiple target tasks in the target data; and in other tasks, such as language modeling, question answering, sentiment analysis, image classification, and object detection.

\section*{Acknowledgments}
This material is based upon work supported by the US Defense Advanced Research Projects Agency (DARPA) under contracts FA8750-19-2-0201 and W911NF-20-1-0080, NSF through CAREER DMS-1847415 and an Alfred Sloan Research Fellowship. The views expressed are those of the authors and do not reflect the official policy or position of the Department of Defense or the U.S. Government.

\bibliography{ref}

\begin{thebibliography}{37}
\providecommand{\natexlab}[1]{#1}
\providecommand{\url}[1]{\texttt{#1}}
\expandafter\ifx\csname urlstyle\endcsname\relax
  \providecommand{\doi}[1]{doi: #1}\else
  \providecommand{\doi}{doi: \begingroup \urlstyle{rm}\Url}\fi

\bibitem[Baxter(2000)]{baxter2000model}
Jonathan Baxter.
\newblock A model of inductive bias learning.
\newblock \emph{Journal of artificial intelligence research}, 12:\penalty0
  149--198, 2000.

\bibitem[Beck \& Teboulle(2003)Beck and Teboulle]{beck2003mirror}
Amir Beck and Marc Teboulle.
\newblock Mirror descent and nonlinear projected subgradient methods for convex
  optimization.
\newblock \emph{Operations Research Letters}, 31\penalty0 (3):\penalty0
  167--175, 2003.

\bibitem[Boucheron et~al.(2013)Boucheron, Lugosi, and
  Massart]{boucheron2013concentration}
St{\'e}phane Boucheron, G{\'a}bor Lugosi, and Pascal Massart.
\newblock \emph{Concentration inequalities: A nonasymptotic theory of
  independence}.
\newblock Oxford university press, 2013.

\bibitem[Cai \& Wei(2021)Cai and Wei]{cai2021transfer}
T~Tony Cai and Hongji Wei.
\newblock Transfer learning for nonparametric classification: Minimax rate and
  adaptive classifier.
\newblock \emph{The Annals of Statistics}, 49\penalty0 (1):\penalty0 100--128,
  2021.

\bibitem[Chernozhukov et~al.(2018)Chernozhukov, Chetverikov, Demirer, Duflo,
  Hansen, Newey, and Robins]{chernozhukov2018double}
Victor Chernozhukov, Denis Chetverikov, Mert Demirer, Esther Duflo, Christian
  Hansen, Whitney Newey, and James Robins.
\newblock Double/debiased machine learning for treatment and structural
  parameters.
\newblock \emph{The Econometrics Journal}, 21\penalty0 (1):\penalty0 C1--C68,
  2018.

\bibitem[Chua et~al.(2021)Chua, Lei, and Lee]{chua2021fine}
Kurtland Chua, Qi~Lei, and Jason~D Lee.
\newblock How fine-tuning allows for effective meta-learning.
\newblock \emph{Advances in Neural Information Processing Systems}, 34, 2021.

\bibitem[Cortes et~al.(2010)Cortes, Mansour, and Mohri]{cortes2010learning}
Corinna Cortes, Yishay Mansour, and Mehryar Mohri.
\newblock Learning bounds for importance weighting.
\newblock In \emph{Nips}, volume~10, pp.\  442--450. Citeseer, 2010.

\bibitem[Devlin et~al.(2019)Devlin, Chang, Lee, and Toutanova]{devlin2019bert}
Jacob Devlin, Ming-Wei Chang, Kenton Lee, and Kristina Toutanova.
\newblock {BERT}: Pre-training of deep bidirectional transformers for language
  understanding.
\newblock In \emph{Proceedings of the 2019 Conference of the North American
  Chapter of the Association for Computational Linguistics: Human Language
  Technologies, Volume 1 (Long and Short Papers)}, pp.\  4171--4186, 2019.

\bibitem[Du et~al.(2021)Du, Hu, Kakade, Lee, and Lei]{du2021fewshot}
Simon~Shaolei Du, Wei Hu, Sham~M. Kakade, Jason~D. Lee, and Qi~Lei.
\newblock Few-shot learning via learning the representation, provably.
\newblock In \emph{International Conference on Learning Representations}, 2021.

\bibitem[Finn et~al.(2017)Finn, Abbeel, and Levine]{finn2017model}
Chelsea Finn, Pieter Abbeel, and Sergey Levine.
\newblock Model-agnostic meta-learning for fast adaptation of deep networks.
\newblock In \emph{International Conference on Machine Learning}, pp.\
  1126--1135. PMLR, 2017.

\bibitem[Gong et~al.(2019)Gong, Lee, Stephenson, Renduchintala, Padhy,
  Ndirango, Keskin, and Elibol]{gong2019comparison}
Ting Gong, Tyler Lee, Cory Stephenson, Venkata Renduchintala, Suchismita Padhy,
  Anthony Ndirango, Gokce Keskin, and Oguz~H Elibol.
\newblock A comparison of loss weighting strategies for multi task learning in
  deep neural networks.
\newblock \emph{IEEE Access}, 7:\penalty0 141627--141632, 2019.

\bibitem[Graves et~al.(2017)Graves, Bellemare, Menick, Munos, and
  Kavukcuoglu]{graves2017automated}
Alex Graves, Marc~G Bellemare, Jacob Menick, Remi Munos, and Koray Kavukcuoglu.
\newblock Automated curriculum learning for neural networks.
\newblock In \emph{international conference on machine learning}, pp.\
  1311--1320. PMLR, 2017.

\bibitem[Hanneke \& Kpotufe(2019)Hanneke and Kpotufe]{hanneke2020value}
Steve Hanneke and Samory Kpotufe.
\newblock On the value of target data in transfer learning.
\newblock In \emph{Advances in Neural Information Processing Systems},
  volume~32, 2019.

\bibitem[Hansen(2000)]{hansen2000sample}
Bruce~E Hansen.
\newblock Sample splitting and threshold estimation.
\newblock \emph{Econometrica}, 68\penalty0 (3):\penalty0 575--603, 2000.

\bibitem[He et~al.(2021)He, Zhang, Ning, and Roth]{he2021foreseeing}
Hangfeng He, Mingyuan Zhang, Qiang Ning, and Dan Roth.
\newblock Foreseeing the benefits of incidental supervision.
\newblock In \emph{Proceedings of the 2021 Conference on Empirical Methods in
  Natural Language Processing}, pp.\  1782--1800, 2021.

\bibitem[Hinton et~al.(2015)Hinton, Vinyals, and Dean]{hinton2015distilling}
Geoffrey Hinton, Oriol Vinyals, and Jeff Dean.
\newblock Distilling the knowledge in a neural network.
\newblock \emph{arXiv preprint arXiv:1503.02531}, 2015.

\bibitem[Hovy et~al.(2006)Hovy, Marcus, Palmer, Ramshaw, and
  Weischedel]{hovy2006ontonotes}
Eduard Hovy, Mitch Marcus, Martha Palmer, Lance Ramshaw, and Ralph Weischedel.
\newblock Ontonotes: the 90\% solution.
\newblock In \emph{Proceedings of the human language technology conference of
  the NAACL, Companion Volume: Short Papers}, pp.\  57--60, 2006.

\bibitem[Jiang \& Zhai(2007)Jiang and Zhai]{jiang2007instance}
Jing Jiang and ChengXiang Zhai.
\newblock Instance weighting for domain adaptation in nlp.
\newblock In \emph{Proceedings of the 45th annual meeting of the association of
  computational linguistics}, pp.\  264--271, 2007.

\bibitem[Johnson \& Khoshgoftaar(2019)Johnson and
  Khoshgoftaar]{johnson2019survey}
Justin~M Johnson and Taghi~M Khoshgoftaar.
\newblock Survey on deep learning with class imbalance.
\newblock \emph{Journal of Big Data}, 6\penalty0 (1):\penalty0 1--54, 2019.

\bibitem[Kalan \& Fabian(2020)Kalan and Fabian]{kalan2020minimax}
MM~Kalan and Z~Fabian.
\newblock Minimax lower bounds for transfer learning with linear and one-hidden
  layer neural networks.
\newblock \emph{Neural Information Processing Systems}, 2020.

\bibitem[Kingma \& Ba(2015)Kingma and Ba]{kingma2015adam}
Diederik~P Kingma and Jimmy Ba.
\newblock Adam: A method for stochastic optimization.
\newblock \emph{ICLR}, 2015.

\bibitem[Koltchinskii(2006)]{koltchinskii2006local}
Vladimir Koltchinskii.
\newblock Local rademacher complexities and oracle inequalities in risk
  minimization.
\newblock \emph{The Annals of Statistics}, 34\penalty0 (6):\penalty0
  2593--2656, 2006.

\bibitem[Liu et~al.(2020)Liu, HaoChen, Wei, and Ma]{liu2020meta}
Hong Liu, Jeff~Z HaoChen, Colin Wei, and Tengyu Ma.
\newblock Meta-learning transferable representations with a single target
  domain.
\newblock \emph{arXiv preprint arXiv:2011.01418}, 2020.

\bibitem[Liu et~al.(2019)Liu, He, Chen, and Gao]{liu2019multi}
Xiaodong Liu, Pengcheng He, Weizhu Chen, and Jianfeng Gao.
\newblock Multi-task deep neural networks for natural language understanding.
\newblock In \emph{Proceedings of the 57th Annual Meeting of the Association
  for Computational Linguistics}, pp.\  4487--4496, 2019.

\bibitem[Maurer et~al.(2016)Maurer, Pontil, and
  Romera-Paredes]{maurer2016benefit}
Andreas Maurer, Massimiliano Pontil, and Bernardino Romera-Paredes.
\newblock The benefit of multitask representation learning.
\newblock \emph{The Journal of Machine Learning Research}, 17\penalty0
  (1):\penalty0 2853--2884, 2016.

\bibitem[McCann et~al.(2018)McCann, Keskar, Xiong, and
  Socher]{mccann2018natural}
Bryan McCann, Nitish~Shirish Keskar, Caiming Xiong, and Richard Socher.
\newblock The natural language decathlon: Multitask learning as question
  answering.
\newblock \emph{arXiv preprint arXiv:1806.08730}, 2018.

\bibitem[Neyshabur et~al.(2020)Neyshabur, Sedghi, and
  Zhang]{NEURIPS2020_0607f4c7}
Behnam Neyshabur, Hanie Sedghi, and Chiyuan Zhang.
\newblock What is being transferred in transfer learning?
\newblock In \emph{Advances in Neural Information Processing Systems},
  volume~33, pp.\  512--523, 2020.

\bibitem[Pennington et~al.(2014)Pennington, Socher, and
  Manning]{pennington2014glove}
Jeffrey Pennington, Richard Socher, and Christopher~D Manning.
\newblock Glove: Global vectors for word representation.
\newblock In \emph{Proceedings of the 2014 conference on empirical methods in
  natural language processing (EMNLP)}, pp.\  1532--1543, 2014.

\bibitem[Shimodaira(2000)]{shimodaira2000improving}
Hidetoshi Shimodaira.
\newblock Improving predictive inference under covariate shift by weighting the
  log-likelihood function.
\newblock \emph{Journal of statistical planning and inference}, 90\penalty0
  (2):\penalty0 227--244, 2000.

\bibitem[Tjong Kim~Sang \& Buchholz(2000)Tjong Kim~Sang and
  Buchholz]{tjong2000introduction}
Erik~F Tjong Kim~Sang and Sabine Buchholz.
\newblock Introduction to the conll-2000 shared task: chunking.
\newblock In \emph{Proceedings of the 2nd workshop on Learning language in
  logic and the 4th conference on Computational natural language
  learning-Volume 7}, pp.\  127--132, 2000.

\bibitem[Tripuraneni et~al.(2020)Tripuraneni, Jordan, and
  Jin]{tripuraneni2020theory}
Nilesh Tripuraneni, Michael Jordan, and Chi Jin.
\newblock On the theory of transfer learning: The importance of task diversity.
\newblock In \emph{Advances in Neural Information Processing Systems},
  volume~33, pp.\  7852--7862, 2020.

\bibitem[Tripuraneni et~al.(2021)Tripuraneni, Jin, and
  Jordan]{tripuraneni2021provable}
Nilesh Tripuraneni, Chi Jin, and Michael Jordan.
\newblock Provable meta-learning of linear representations.
\newblock In \emph{International Conference on Machine Learning}, pp.\
  10434--10443. PMLR, 2021.

\bibitem[Vapnik(2013)]{vapnik2013nature}
Vladimir Vapnik.
\newblock \emph{The Nature of Statistical Learning Theory}.
\newblock Springer science \& business media, 2013.

\bibitem[Wellner \& van~der Vaart(2013)Wellner and van~der
  Vaart]{wellner2013weak}
Jon Wellner and Aad van~der Vaart.
\newblock \emph{Weak convergence and empirical processes: with applications to
  statistics}.
\newblock Springer Science \& Business Media, 2013.

\bibitem[Wolf et~al.(2020)Wolf, Chaumond, Debut, Sanh, Delangue, Moi, Cistac,
  Funtowicz, Davison, and Shleifer]{wolf2020transformers}
Thomas Wolf, Julien Chaumond, Lysandre Debut, Victor Sanh, Clement Delangue,
  Anthony Moi, Pierric Cistac, Morgan Funtowicz, Joe Davison, and Sam Shleifer.
\newblock Transformers: State-of-the-art natural language processing.
\newblock In \emph{Proceedings of the 2020 Conference on Empirical Methods in
  Natural Language Processing: System Demonstrations}, pp.\  38--45, 2020.

\bibitem[Zamir et~al.(2018)Zamir, Sax, Shen, Guibas, Malik, and
  Savarese]{zamir2018taskonomy}
Amir~R Zamir, Alexander Sax, William Shen, Leonidas~J Guibas, Jitendra Malik,
  and Silvio Savarese.
\newblock Taskonomy: Disentangling task transfer learning.
\newblock In \emph{Proceedings of the IEEE conference on computer vision and
  pattern recognition}, pp.\  3712--3722, 2018.

\bibitem[Zhang \& Yang(2021)Zhang and Yang]{zhang2021survey}
Yu~Zhang and Qiang Yang.
\newblock A survey on multi-task learning.
\newblock \emph{IEEE Transactions on Knowledge and Data Engineering}, 2021.

\end{thebibliography}
\bibliographystyle{iclr2022_conference}

\appendix
\clearpage
\section{Omitted Proofs}
\label{sec:appx-theory}

We start by introducing some notations.
For a set $S$, we let $\Indc_S$ be its indicator function and we use $\# S$ and $|S|$ interchangeably to denote its cardinality. 
For two positive sequences $\{a_n\}$ and $\{b_n\}$, we write 
$a_n \lesssim b_n$ or $a_n = \calO(b_n)$ to denote $\limsup a_n/b_n < \infty$,  
and we let $a_n \gtrsim b_n$ or $a_n = \Omega(b_n)$ to denote $b_n \lesssim a_n$. 
Meanwhile, the notation $a_n\asymp b_n$ or $a_n = \Theta(b_n)$ means $a_n \lesssim b_n$ and $a_n \gtrsim b_n$ simultaneously.
For a vector $\bsx$, we let $\|\bsx\|$ denote its $\ell_2$ norm.
In this section, we treat $L_\ell, L_\calF, C_\Phi, C_\calF$ as absolute constants and we hide the dependence on those parameters in our theoretical results. The exact dependence on those parameters can be easily traced from the proofs.

\subsection{\NAMEALG{} and task distance minimization} \label{appx:tawt_and_dist_minimization}
If we estimate $\calL_0^\star(\phi) = \min_{f_0\in\calF}\calL_0(\phi, f_0)$ by $\min_{f_0\in\calF}\hat\calL_0(\phi, f_0)$ and $\bar \phi^\weights \in \argmin_{\phi\in\Phi} \sum_{t=1}^T \weight_t \cdot \min_{f_t\in\calF}\calL_t(\phi, f_t)$ by the argmin of $\sum_{t=1}^T \weight_t \cdot\min_{f_t\in\calF}\hat \calL_t(\phi, f_t)$, then overall, the quantity $\calL_0^\star(\bar\phi^\weights)$ can be estimated by
$$
	\min_{f_0\in\calF}\hat\calL_0(\phi, f_0) \qquad \textnormal{subject to } \phi \in \argmin_{\psi\in\Phi} \sum_{t=1}^T \weight_t \hat\calL_t(\psi, f_t).
$$
Thus, minimizing the estimated task distance over the weights (note that $\calL_0^\star(\phi^\star)$ is a constant) is equivalent to the optimization formulation \eqref{eq:alg_constrained_ver} of our algorithm. 

To further relate \NAMEALG{} with task distance minimization, we provided an analysis of the two-step procedure \eqref{eq:representation_learning}--\eqref{eq:representation_transfer} with fixed weights. The next theorem gives the corresponding performance guarantees.

\begin{theorem}[Performance of the two-step procedure with fixed weights]
\label{thm:ub_fixed_weights}
Let $(\hat\phi, \hat f_0)$ be obtained via solving \eqref{eq:representation_learning}--\eqref{eq:representation_transfer} with fixed $\weights$.
Let Assumptions \ref{assump:lip} and \ref{assump:unif_entropy} hold. 
Fix any $\delta\in(0, 1)$ and define $N_\weights = (\sum_{t=1}^T \weight_t^2/n_t)^{-1}$. 
There exists a constant $C = C(L_\ell, L_\calF, C_\Phi, C_\calF)>0$ such that the following holds: if for any representation $\phi$ in a $C\sqrt{(\nu_\Phi + T\nu_\calF + \log \delta^{-1})/N_\weights}$-neighborhood of $\bar\Phi^\weights$\footnote{Recall that a representation $\phi$ is in an $\ep$-neighborhood of $\bar\Phi^\weights$ if $\sum_{t=1}^T \weight_t [\calL_t^\star(\phi) - \calL_t^\star(\bar \phi^\weights)] \leq \ep $.},  
there exists a specific $\bar \phi^\weights \in \bar \Phi^\weights$ such that $\phi$ is $(\rho , C_\rho)$-transferable, then there exists another $C' = C'(L_\ell, L_\calF, C_\Phi, C_\calF, C_\rho, \rho)$ such that with probability at least $1-\delta$, we have
\begin{align}
	\calL_0(\hat\phi, \hat f_0) - \calL_0(\phi^\star_0, f_0^\star) & 
	\leq C' \bigg[\bigg(\frac{\nu_\calF + \log(1/\delta)}{n_0}\bigg)^{\frac{1}{2}} + \bigg(\frac{\nu_\Phi + T\nu_\calF + \log(1/\delta)}{N_\weights}\bigg)^{\frac{1}{2\rho}}\bigg]\nonumber\\
	\label{eq:ub_fixed_weights}
	& \qquad + \dist\Big(\sum_{t=1}^T \weight_t \calD_t, \calD_0\Big). 
\end{align}
\end{theorem}
\begin{proof}
    See Appendix \ref{prf:thm:ub_fixed_weights}.
\end{proof}
Note that the task distance naturally appears in the upper bound above. This theorem can be regarded as a predecessor of Theorem \ref{thm:ub_learned_weights}, and we refer readers to Section \ref{subsec:alg_performance} for a detailed exposition on the meaning of each term in the upper bound.


\subsection{A Sufficient Condition for Exactly Matching  \texorpdfstring{$\bar\phi^\weights = \phi_0^\star$}{}}\label{appx:exact_match}
\begin{proposition}[Weighting and task distance minimization]
	\label{prop:weighting_and_task_distance}
	Suppose that for every $\phi\in\Phi$, there exists two source tasks $1\leq t_1, t_2\leq T$ such that $\calL_{t_1}^\star(\phi) \leq \calL_0^\star(\phi) \leq \calL_{t_2}^\star(\phi)$.
	Then there exists a choice of weights $\weights$, possibly depending on $\phi$, such that
	$
	\dist(\sum_{t=1}^T \weight_t\calD_t , \calD_0) = 0.
	$
\end{proposition}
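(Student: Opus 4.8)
The plan is to exploit the freedom, explicitly granted by the statement, to let the weights vary with the representation $\phi$. First I would record the elementary reduction. Since $\phi_0^\star$ minimizes $\calL_0^\star$, we always have $\calL_0^\star(\bar\phi^\weights) \geq \calL_0^\star(\phi_0^\star)$, so every term inside the supremum defining $\dist(\sum_t \weight_t\calD_t, \calD_0)$ is nonnegative. Hence the task distance is nonnegative and equals zero exactly when every minimizer of the weighted objective $\phi \mapsto \sum_{t=1}^T \weight_t \calL_t^\star(\phi)$ is target-optimal, i.e.\ $\bar\Phi^\weights \subseteq \argmin_\phi \calL_0^\star(\phi)$. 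This is the condition I would aim to certify.

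The key observation is that the sandwich hypothesis says precisely that $\calL_0^\star(\phi)$ lies in the interval $[\min_t \calL_t^\star(\phi), \max_t \calL_t^\star(\phi)]$ for every $\phi$, and this interval is exactly the convex hull of the finite set $\{\calL_t^\star(\phi)\}_{t=1}^T \subset \bbR$. Thus for each $\phi$ I can realize $\calL_0^\star(\phi)$ as a convex combination of the source risks, and the hypothesis already hands me the two indices $t_1, t_2$ with $\calL_{t_1}^\star(\phi) \leq \calL_0^\star(\phi) \leq \calL_{t_2}^\star(\phi)$. I would therefore define a $\phi$-dependent weight vector supported on $\{t_1, t_2\}$ by $\weight_{t_1}(\phi) = \lambda$, $\weight_{t_2}(\phi) = 1-\lambda$, with $\lambda = (\calL_{t_2}^\star(\phi) - \calL_0^\star(\phi))/(\calL_{t_2}^\star(\phi) - \calL_{t_1}^\star(\phi))$ whenever the denominator is nonzero, and any valid $\lambda$ (say $\lambda = 1$) in the degenerate case $\calL_{t_1}^\star(\phi) = \calL_{t_2}^\star(\phi)$, where the sandwich forces $\calL_0^\star(\phi)$ to coincide with the common value. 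By construction $\lambda \in [0,1]$, so $\weights(\phi) \in \Delta^{T-1}$, and $\sum_{t=1}^T \weight_t(\phi)\calL_t^\star(\phi) = \calL_0^\star(\phi)$.

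With this choice the weighted objective coincides identically with the target risk: $\sum_{t=1}^T \weight_t(\phi)\calL_t^\star(\phi) = \calL_0^\star(\phi)$ for every $\phi$. Consequently its set of minimizers is exactly $\argmin_\phi \calL_0^\star(\phi)$, and every such minimizer attains the value $\calL_0^\star(\phi_0^\star)$. The supremum in Definition \ref{def:task_dist} therefore equals $\calL_0^\star(\phi_0^\star)$, and the distance vanishes, which is the claim.

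I expect the only real difficulty to be conceptual rather than computational: one must interpret the $\phi$-dependent weighting consistently within the task-distance formalism, so that the relevant ``$\weights$-weighted source task'' corresponds to the pointwise combination $\phi \mapsto \sum_{t=1}^T \weight_t(\phi)\calL_t^\star(\phi)$ whose minimizers define $\bar\Phi^\weights$. I would also be careful to verify the degenerate interpolation case and to confirm that the map $\phi \mapsto \weights(\phi)$ is a legitimate simplex-valued selection. Once this bookkeeping is settled, the argument reduces to the single remark that the sandwich condition is equivalent to $\calL_0^\star$ lying, pointwise in $\phi$, in the convex hull of the source risk functions $\{\calL_t^\star\}_{t=1}^T$.
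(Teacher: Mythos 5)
Your proof is correct and takes essentially the same route as the paper's: both reduce the claim to exhibiting weights with the pointwise identity $\sum_{t=1}^T \weight_t(\phi)\,\calL_t^\star(\phi) = \calL_0^\star(\phi)$, and both realize it with the identical two-point interpolation $\weight_{t_1} = \bigl(\calL_{t_2}^\star(\phi) - \calL_0^\star(\phi)\bigr)/\bigl(\calL_{t_2}^\star(\phi) - \calL_{t_1}^\star(\phi)\bigr)$, $\weight_{t_2} = 1 - \weight_{t_1}$, with the same treatment of the degenerate case $\calL_{t_1}^\star(\phi) = \calL_{t_2}^\star(\phi)$. Your extra bookkeeping (the explicit reduction to $\bar\Phi^\weights \subseteq \argmin_{\phi}\calL_0^\star(\phi)$ and the remark on interpreting $\phi$-dependent weights within the task-distance formalism) only makes explicit what the paper leaves implicit.
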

\begin{proof}
By construction, it suffices to show the existence of some $\weights$ such that 
\begin{equation}
	\label{eq:weighting_and_task_distance_sufficient_condition}
	\sum_{t=1}^T \weight_t \calL_t^\star(\phi) = \calL_0^\star(\phi), \qquad \forall \phi\in\Phi.
\end{equation}	
By assumption, fix any $\phi\in\Phi$, we can find $1\leq t_1, t_2\leq T$ such that $\calL_{t_1}^\star(\phi) \leq \calL_0^\star(\phi) \leq \calL_{t_2}^\star(\phi)$. We set $\weight_t = 0$ for any $t\notin \{t_1, t_2\}$. If $\calL_{t_1}^\star(\phi) = \calL_0^\star(\phi) = \calL_{t_2}^\star(\phi)$, then any choice of $\weight_{t_1}, \weight_{t_2}$ will suffice. Otherwise, we let
$$
	\weight_{t_1} = \frac{\calL_{t_2}^\star(\phi) - \calL_0^{\star}(\phi)}{\calL_{t_2}^\star(\phi) -\calL_{t_1}^\star(\phi)}, 
	\qquad \weight_{t_2} = \frac{\calL_{0}^\star(\phi) - \calL_{t_1}^{\star}(\phi)}{\calL_{t_2}^\star(\phi) -\calL_{t_1}^\star(\phi)}.
$$
It is straightforward to check that such a choice of $\weights$ indeed ensures \eqref{eq:weighting_and_task_distance_sufficient_condition}. The proof is concluded.
\end{proof}

\subsection{Proof of Theorem \ref{thm:ub_fixed_weights}}\label{prf:thm:ub_fixed_weights}
We start by stating two useful lemmas.

\begin{lemma}[Error for learning the imperfect representation from source data]
\label{lemma:err_learn_encoder}
Under the setup of Theorem \ref{thm:ub_fixed_weights}, 
there exists a constant $C_1 = C_1(L_\ell, L_\calF, C_\Phi, C_\calF)>0$ such that 
for any $\delta\in(0, 1)$, we have
\begin{equation}
	\label{eq:err_learn_encoder}
	\sum_{t=1}^T \weight_t [\calL^\star_t(\hat \phi) - \calL^\star_t(\bar \phi^\weights)] \leq C_1 \cdot 
	\sqrt{\frac{\nu_\Phi + T\nu_\calF + \log(1/\delta)}{N_\weights}}
\end{equation}	
with probability at least $1-\delta$.
\end{lemma}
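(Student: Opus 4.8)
The plan is to establish \eqref{eq:err_learn_encoder} as a uniform (``slow-rate'') excess-risk bound for the weighted empirical risk minimizer $\hat\phi$ over the source tasks. Write $\hat\calL^\star_t(\phi) := \min_{f_t\in\calF}\hat\calL_t(\phi, f_t)$, and recall that $\hat\phi$ minimizes $\phi\mapsto \sum_{t=1}^T \weight_t \hat\calL^\star_t(\phi)$ while $\bar\phi^\weights$ minimizes $\phi\mapsto\sum_{t=1}^T\weight_t\calL^\star_t(\phi)$. The first step is the usual basic inequality: since $\hat\phi$ is the empirical minimizer, $\sum_t \weight_t\hat\calL^\star_t(\hat\phi)\leq \sum_t\weight_t\hat\calL^\star_t(\bar\phi^\weights)$, and inserting and subtracting the population quantities at $\hat\phi$ and at $\bar\phi^\weights$ gives
\[
\sum_{t=1}^T \weight_t[\calL^\star_t(\hat\phi)-\calL^\star_t(\bar\phi^\weights)] \leq 2\sup_{\phi\in\Phi}\Big|\sum_{t=1}^T \weight_t[\hat\calL^\star_t(\phi)-\calL^\star_t(\phi)]\Big|.
\]
Thus it suffices to control the weighted empirical process $\phi\mapsto \sum_t\weight_t[\hat\calL^\star_t(\phi)-\calL^\star_t(\phi)]$ uniformly over $\Phi$.

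The second step removes the inner minimization over the task-specific functions. Using $|\min_g a(g)-\min_g b(g)|\leq \sup_g|a(g)-b(g)|$, I would bound $|\hat\calL^\star_t(\phi)-\calL^\star_t(\phi)|\leq \sup_{f_t\in\calF}|\hat\calL_t(\phi,f_t)-\calL_t(\phi,f_t)|$, which lets me work with the composite loss class $\calG := \{(\bsx,y)\mapsto \ell(f\circ\phi(\bsx),y): \phi\in\Phi, f\in\calF\}$, whose elements take values in $[0,1]$. Since the samples across the $T$ tasks are independent and the $t$-th term carries weight $\weight_t$ averaged over $n_t$ samples, the relevant fluctuation scale is governed by $\sum_t \weight_t^2/n_t = N_\weights^{-1}$. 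A symmetrization argument (introducing independent Rademacher signs per task and per sample) then passes from the deviation to a weighted Rademacher complexity of $\calG$.

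The third step bounds this Rademacher complexity by chaining (Dudley's entropy integral). The key accounting is that a \emph{single} representation $\phi$ is shared across all tasks, so $\Phi$ contributes its entropy $\nu_\Phi$ only once, whereas the inner suprema involve a separate task-specific function $f_t$ for each of the $T$ tasks, so $\calF$ contributes $T$ times; the total metric entropy of the relevant product class is therefore of order $\nu_\Phi + T\nu_\calF$. Concretely, Assumption \ref{assump:lip} (Lipschitzness of $\ell$ and of $f$) together with Assumption \ref{assump:unif_entropy} yields, via the composition $\ell(f\circ\phi(\cdot),\cdot)$, a covering-number bound of the form $\log\calN \lesssim (\nu_\Phi + T\nu_\calF)\log(1/\ep)$; because this is a VC-type (polynomial-covering) class the entropy integral converges and produces the rate $\sqrt{(\nu_\Phi+T\nu_\calF)/N_\weights}$ for the expected supremum.

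Finally, to upgrade the expectation bound to the stated high-probability form, I would invoke a bounded-differences inequality: changing one sample in task $t$ perturbs the weighted process by at most $O(\weight_t/n_t)$, so the sum of squared perturbations is $O(\sum_t \weight_t^2/n_t)=O(N_\weights^{-1})$, and McDiarmid's inequality contributes a deviation of order $\sqrt{\log(1/\delta)/N_\weights}$. Combining the two pieces yields the claimed $\sqrt{(\nu_\Phi+T\nu_\calF+\log(1/\delta))/N_\weights}$. I expect the main obstacle to be the second and third steps jointly: one must handle the inner minimization over $\{f_t\}$ while carefully tracking that the shared $\phi$ is counted once and the $T$ distinct task-specific functions are counted $T$ times, and simultaneously verify that the weighted symmetrization produces the \emph{effective} sample size $N_\weights$ rather than the naive $\min_t n_t$. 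This asymmetric bookkeeping is precisely what gives the $\nu_\Phi + T\nu_\calF$ complexity and the $N_\weights^{-1/2}$ rate, and it is the heart of why re-weighting the sources helps.
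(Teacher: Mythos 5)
Your proposal is correct and follows essentially the same route as the paper's proof: an ERM basic inequality reducing the weighted excess risk to a uniform deviation term, symmetrization plus Dudley's entropy integral with the shared-representation/per-task accounting $\nu_\Phi + T\nu_\calF$, and McDiarmid's bounded-differences inequality (perturbation $O(\weight_t/n_t)$ per sample) yielding the effective sample size $N_\weights = (\sum_{t=1}^T \weight_t^2/n_t)^{-1}$. The only cosmetic difference is that the paper anchors the empirical process at the fixed pair $(\bar\phi^\weights, \{\bar f_t^\weights\})$ and so works with a one-sided supremum without absolute values, whereas your two-sided supremum with the inner minimization removed via $|\min_g a(g) - \min_g b(g)| \leq \sup_g |a(g)-b(g)|$ puts absolute values inside the weighted sum and thus needs one standard extra step (e.g., augmenting the class with negations, costing only $T\log 2$ in entropy) before symmetrization, which does not affect the rate.
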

\begin{proof}
	See Appendix \ref{prf:lemma:err_learn_encoder}.
\end{proof}

\begin{lemma}[Error for learning the task-specific function from target data]
\label{lemma:err_learn_decoder}
Under the setup of Theorem \ref{thm:ub_fixed_weights}, there exists a constant $C_2 = C_2(L_\ell, C_\calF) > 0$ such that for any $\delta\in(0, 1)$, we have
\begin{equation}
	\label{eq:err_learn_decoder}
	\calL_0(\hat\phi, \hat f_0) - \calL_0^\star(\hat \phi) \leq C_2 \cdot \sqrt{\frac{\nu_\calF + \log(1/\delta)}{n_0}}
\end{equation}	
with probability at least $1-\delta$.
\end{lemma}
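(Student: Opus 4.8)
The plan is to exploit the fact that, in the fixed-weights setting, the representation $\hat\phi$ is computed from the source data $S_1,\ldots,S_T$ via \eqref{eq:representation_learning} and is therefore \emph{independent} of the target sample $S_0$. Conditioning on $\hat\phi$ reduces the claim to a textbook empirical-risk-minimization excess-risk bound over the single low-complexity class $\calF$ (rather than over $\Phi\times\calF$), which is precisely what yields a rate involving only $\nu_\calF$ and no dependence on $\nu_\Phi$. Since the conditional bound will hold for \emph{every} fixed value of the representation, it will then hold unconditionally after integrating out $\hat\phi$.

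First I would introduce a population minimizer $f_0^{\hat\phi}\in\argmin_{f_0\in\calF}\calL_0(\hat\phi,f_0)$, so that $\calL_0^\star(\hat\phi)=\calL_0(\hat\phi,f_0^{\hat\phi})$, and carry out the usual three-term decomposition
\begin{align}
	\calL_0(\hat\phi,\hat f_0) - \calL_0^\star(\hat\phi)
	&= \big[\calL_0(\hat\phi,\hat f_0) - \hat\calL_0(\hat\phi,\hat f_0)\big]
	+ \big[\hat\calL_0(\hat\phi,\hat f_0) - \hat\calL_0(\hat\phi,f_0^{\hat\phi})\big] \nonumber\\
	&\quad + \big[\hat\calL_0(\hat\phi,f_0^{\hat\phi}) - \calL_0(\hat\phi,f_0^{\hat\phi})\big].
\end{align}
The middle bracket is nonpositive by the optimality of $\hat f_0$ for $\hat\calL_0(\hat\phi,\cdot)$ in \eqref{eq:representation_transfer}, and the two remaining brackets are each dominated by the uniform deviation $\sup_{f_0\in\calF}|\calL_0(\hat\phi,f_0)-\hat\calL_0(\hat\phi,f_0)|$. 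Hence the excess risk is bounded by twice this quantity.

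It then remains to control $\sup_{f_0\in\calF}|\calL_0(\hat\phi,f_0)-\hat\calL_0(\hat\phi,f_0)|$ for the fixed representation $\hat\phi$. The loss class $\{(\bsx,y)\mapsto\ell(f_0\circ\hat\phi(\bsx),y):f_0\in\calF\}$ takes values in $[0,1]$, and by the $L_\ell$-Lipschitzness of $\ell$ (Assumption \ref{assump:lip}) any $L^2(\bbQ_\calZ)$-cover of $\calF$ at scale $\ep/L_\ell$ induces an $\ep$-cover of this loss class, where $\bbQ_\calZ$ is the empirical measure of the pushed-forward points $\hat\phi(\bsx_{0i})$. Assumption \ref{assump:unif_entropy} bounds this covering number by $(C_\calF L_\ell/\ep)^{\nu_\calF}$, uniformly over the measure $\bbQ_\calZ$. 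Symmetrization followed by Dudley's entropy integral then bounds the expected uniform deviation by a constant multiple of $\sqrt{\nu_\calF/n_0}$, and a bounded-differences (McDiarmid) argument, valid since each summand lies in $[0,1]$, upgrades this to a high-probability statement by adding a $\sqrt{\log(1/\delta)/n_0}$ tail term. Combining the two and folding the $\log(1/\delta)$ into the numerator gives \eqref{eq:err_learn_decoder}.

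The argument is largely routine, and the one point that requires care is the interface between the randomness of $\hat\phi$ and the empirical process over the target data. The crucial structural observation is the independence of $\hat\phi$ from $S_0$, which is what permits the conditioning step and keeps the effective complexity at $\nu_\calF$ rather than $\nu_\Phi+\nu_\calF$. The uniform-in-$\bbQ_\calZ$ form of Assumption \ref{assump:unif_entropy} is exactly what is needed so that the covering bound survives the fact that the pushforward measure $\hat\phi_\#$ is itself data- and $\hat\phi$-dependent; without this uniformity one could not detach the covering number from the random representation, and the clean $\sqrt{\nu_\calF/n_0}$ rate would be lost.
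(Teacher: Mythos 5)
Your proposal is correct and follows essentially the same route as the paper's own proof: conditioning on the source data so that $\hat\phi$ is fixed and independent of $S_0$, decomposing the excess risk around the population minimizer $f_{0,\hat\phi}\in\argmin_{f_0\in\calF}\calL_0(\hat\phi,f_0)$ with the empirical-optimality term dropped, and controlling the remaining deviations by an empirical process over $\calF$ via symmetrization, Dudley's entropy integral, and McDiarmid's inequality under Assumption \ref{assump:unif_entropy}. The only cosmetic difference is that you bound both deviation terms by twice a two-sided supremum, while the paper keeps the term at $f_{0,\hat\phi}$ separate; your write-up actually spells out the covering/contraction details that the paper delegates to the proof of Lemma \ref{lemma:err_learn_encoder}.
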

\begin{proof}
	See Appendix \ref{prf:lemma:err_learn_decoder}.
\end{proof}

To prove Theorem \ref{thm:ub_fixed_weights}, we start by writing
\begin{align*}
	\calL_0(\hat\phi, \hat f_0) - \calL_0(\phi^\star_0, f_0^\star) 
	& = \calL_0(\hat \phi, \hat f_0) - \calL_0^\star(\hat \phi) +   \calL_0^\star(\hat \phi) - \calL_0^\star(\bar \phi^\weights) + \calL_0^\star(\bar \phi^\weights) - \calL_0^\star(\phi^\star_0).
\end{align*}
Suppose the two high probability events in Lemmas \ref{lemma:err_learn_encoder} and \ref{lemma:err_learn_decoder} hold. Then we can bound $\calL_0(\hat\phi, \hat f_0) - \calL_0^\star(\hat \phi)$ by \eqref{eq:err_learn_decoder}. Meanwhile, since $\hat\phi$ is in a $C\sqrt{(\nu_\Phi + T\nu_\calF + \log \delta^{-1})/N_\weights}$-neighborhood of $\bar\Phi^\weights$, 
we can invoke the transferability assumption to get
$$
	\calL_0^\star(\hat\phi) - \calL_0^\star(\bar \phi^\weights) \leq C_\rho \bigg(\sum_{t=1}^T \weight_t [\calL^\star_t(\phi) - \calL^\star_t(\bar \phi^\weights)]\bigg)^{1/\rho} \leq C_\rho C_1^{1/\rho} \cdot \bigg(\frac{\nu_\Phi + T \nu_\calF + \log(1/\delta)}{N_\weights}\bigg)^{1/2\rho}.
$$
Moreover, we have the trivial bound that $\calL_0^\star(\bar \phi^\weights) - \calL_0^\star(\phi^\star_0) \leq \sup_{\bar \phi^\weights} \calL_0^\star(\bar \phi^\weights) - \calL_0^\star(\phi^\star_0)$.
Assembling the three bounds above gives the desired result.

\subsubsection{Proof of Lemma \ref{lemma:err_learn_encoder}}\label{prf:lemma:err_learn_encoder}
We start by writing
\begin{align*}
	& \sum_{t=1}^T \weight_t[ \calL_t^\star(\hat \phi) - \calL_t^\star(\bar\phi^\weights)] \\
	& \leq \sum_{t=1}^T \weight_t[\calL_0(\hat \phi, \hat f_t) - \calL_t(\bar\phi^\weights, \bar f_t^\weights)] \\
	& = \sum_{t=1}^T \weight_t \bigg( \calL_t(\hat\phi, \hat f_t) - \hat\calL_t(\hat \phi, \hat f_t) + \hat\calL_t(\hat \phi, \hat f_t) - \hat \calL_t(\bar \phi^\weights, \bar f_t^\weights) + \hat \calL_t(\bar \phi^\weights, \bar f_t^\weights) - \calL_t(\bar \phi^\weights, \bar f_t^\weights) \bigg)\\
	& \leq \sum_{t=1}^T \weight_t \bigg(\calL_t(\hat\phi, \hat f_t) - \hat\calL_t(\hat \phi, \hat f_t)  + \hat \calL_t(\bar \phi^\weights, \bar f_t^\weights) - \calL_t(\bar \phi^\weights, \bar f_t^\weights) \bigg) \\
	& \leq \sup_{\phi\in\Phi, \{f_t\}\subset \calF} \weight_t \bigg( \calL_t(\phi, f_t) - \hat\calL_t(\phi, f_t)  + \hat \calL_t(\bar \phi^\weights, \bar f_t^\weights) - \calL_t(\bar \phi^\weights, \bar f_t^\weights) \bigg) \\
	& = \sup_{\phi\in\Phi, \{f_t\}\subset \calF} \sum_{t=1}^T \weight_t \cdot \frac{1}{n_t} \sum_{i=1}^{n_t} \bigg( \calL_t(\phi, f_t) - \ell(f_0\circ \phi(\bsx_{ti}), y_{ti})  + \ell(\bar f_t^\weights \circ \bar \phi^\weights(\bsx_{ti}), y_{ti}) - \calL_t(\bar \phi^\weights, \bar f_t^\weights)\bigg),
\end{align*}
where the first inequality is by $\calL_t^\star(\cdot) = \min_{f_t\in\calF} \calL_t(\cdot, f_t)$ and the second inequality is by the fact that $(\hat\phi, \{\hat f_t\}_{t=1}^T)$ is a minimizer of \eqref{eq:representation_learning}.
To simplify notations, let $\bsz_{ti} = (\bsx_{ti}, y_{ti})$ and let the right-hand side above be $G(\{\bsz_{ti}\})$.
Fix two indices $1\leq t'\leq T$, $1\leq i_{t'}\leq n_t$, and let $\{\tilde \bsz_{ti}\}$ be the source datasets formed by replacing $\bsz_{t', i_{t'}}$ with some $\tilde \bsz_{t', i_{t'}} = (\tilde \bsx_{t', i_{t'}}, \tilde y_{t', i_{t'}}) \in \calX\times \calY$. Since $\{\bsz_{ti}\}$ and $\{\tilde\bsz_{ti}\}$ differ by only one example, we have
\begin{align*}
	& \sum_{t\neq t'} \sum_{i=1}^{n_t} \frac{\weight_t}{n_t} \bigg( \calL_t(\phi, f_t) - \ell(f_t\circ\phi(\bsx_{ti}), y_{ti}) + \ell(\bar f_t^\weights \circ \phi^\weights(\bsx_{ti}), y_{ti}) + \calL_t(\bar\phi^\weights, \bar f_t^\weights) \bigg) \\
	& \qquad + \sum_{i\neq i_{t'}} \frac{\weight_{t'}}{n_{t'}} \bigg( \calL_{t'}(\phi, f_{t'}) - \ell(f_{t'}\circ\phi(\bsx_{t', i}), y_{t', i}) + \ell(\bar f_{t'}^\weights \circ \phi^\weights(\bsx_{t', i}), y_{t', i}) + \calL_{t'}(\bar\phi^\weights, \bar f_{t'}^\weights) \bigg)\\
	& \qquad + \frac{\weight_{t'}}{n_{t'}} \bigg( \calL_{t'}(\phi, f_{t'}) - \ell(f_{t'}\circ\phi(\bsx_{t', i_{t'}}), y_{t', i_{t'}}) + \ell(\bar f_{t'}^\weights \circ \phi^\weights(\bsx_{t', i_{t'}}), y_{t', i_{t'}}) + \calL_{t'}(\bar\phi^\weights, \bar f_{t'}^\weights) \bigg) \\
	& \qquad - G(\{\tilde \bsz_{t i}\}) \\
	& \leq \frac{\weight_{t'}}{n_{t'}} \bigg[ \bigg( \calL_{t'}(\phi, f_{t'}) - \ell(f_{t'}\circ \phi(\bsx_{t', i_{i'}}), y_{t', i_{t'}})\bigg) - \bigg( \calL_{t'}(\bar \phi^\weights, \bar f_{t'}^\weights) - \ell(\bar f_{t'}^\weights\circ \bar \phi^\weights(\bsx_{t', i_{t'}}), y_{t', i_{t'}}) \bigg) \\
	& \qquad\qquad + \bigg( \calL_{t'}(\phi, f_{t'}) - \ell(f_{t'}\circ \phi(\tilde\bsx_{t', i_{i'}}), \tilde y_{t', i_{t'}})\bigg) - \bigg( \calL_{t'}(\bar \phi^\weights, \bar f_{t'}^\weights) - \ell(\bar f_{t'}^\weights\circ \bar \phi^\weights(\tilde\bsx_{t', i_{t'}}), \tilde y_{t', i_{t'}}) \bigg)\bigg] \\
	\label{eq:err_learn_encoder_finite_diff}
	& \leq \frac{4 \weight_{t'}}{n_{t'}}, \numberthis
\end{align*}
where the last inequality is by the fact that the loss function is bounded in $[0, 1]$. Taking the supremum over $\phi\in\Phi, \{f_t\}\subset \calF$ at both sides, we get $G(\{\bsz_{ti}\}) - G(\{\tilde \bsz_{ti}\}) \leq 4\weight_{t'}/n_{t'}$. A symmetric argument shows that the reverse inequality, namely $G(\{\tilde \bsz_{ti}\}) - G(\{\bsz_{ti}\}) \leq 4\weight_{t'}/n_{t'}$, is also true. That is, we have shown
$$
	|G(\{\bsz_{ti}\}) - G(\{\tilde \bsz_{ti}\})| \leq \frac{4\weight_{t'}}{n_{t'}}.
$$
This means that we can invoke McDiarmid's inequality to get
$$
	\bbP\bigg( G(\{\bsz_{ti}\}) - \bbE [G(\{\bsz_{ti}\})] \geq \ep\bigg) \leq  \exp\bigg\{\frac{-2\ep^2}{\sum_{t=1}^T \sum_{i=1}^{n_t} 16\weight_t^2/n_t^2}\bigg\}
$$
for any $\ep>0$, or equivalently
\begin{equation}
	\label{eq:err_learn_encoder_mcdiarmid}
	G(\{\bsz_{ti}\}) \leq \bbE[G(\{\bsz_{ti}\})] + 2\sqrt{2} \cdot \sqrt{\frac{\log(1/\delta)}{N_\weights}}
\end{equation}	
with probability at least $1-\delta$ for any $\delta\in(0, 1)$. To bound the expectation term, we use a standard symmetrization argument (see, e.g., Lemma 11.4 in \citep{boucheron2013concentration}) to get
$$
	\sqrt{N_\weights} \bbE[G(\{\bsz_{ti}\})] \leq 2\sqrt{N_{\weights}} \bbE \bigg[\sup_{\phi\in\Phi, \{f_t\}\subset \calF} \sum_{t=1}^T \sum_{i=1}^{n_t} \ep_{ti} \cdot \frac{\weight_t}{n_t} \bigg(-\ell(f_t\circ \phi(\bsx_{ti}), y_{ti}) + \ell(\bar f_t\circ \bar \phi(\bsx_{ti}), y_{ti})\bigg) \bigg],
$$
where the expectation is taken over the randomness in both the source datasets $\{\bsz_{ti}\}$ and the i.i.d.~ symmetric Rademacher random variables $\{\ep_{ti}\}$. Consider the function space $\mathscr{G} : = \{(\phi, \{f_t\}_{t=1}^T): \phi\in \Phi, \{f_t\}\subset \calF\}$. Let
$$
	M_g := \sqrt{N_\weights} \cdot \sum_{t=1}^T \sum_{i=1}^{n_t} \ep_{ti} \cdot \frac{\weight_t}{n_t} \cdot [-\ell(f_t\circ \phi(\bsx_{ti}), y_{ti})], \qquad g = (\phi, \{f_t\}_{t=1}^T) \in \mathscr{G}
$$
be the empirical process indexed by the function space $\mathscr{G}$. Conditional on the randomness in the data $\{\bsz_{ti}\}$, this is a Rademacher process with sub-Gaussian increments:
$$
	\log \bbE e^{\lambda(M_g - M_{\tilde g})} \leq \frac{\lambda^2}{2} \sfd^2(g, \tilde g), \qquad \forall \lambda \geq 0, g = (\phi, \{f_t\}_{t=1}^T), \tilde g = (\tilde \phi, \{\tilde f_t\}_{t=1}^T) \in \mathscr{G},
$$
where the pseudometric
$$
	\sfd^2(g, \tilde g) := N_\weights \cdot \sum_{t=1}^T\sum_{i=1}^{n_t} \frac{\weight_t^2}{n_t^2} \bigg( \ell(f_t\circ\phi(\bsx_{ti}), y_{ti}) - \ell(\tilde f_t \circ \tilde \phi(\bsx_{ti}), y_{ti}) \bigg)^2 \leq N_\weights \cdot \sum_{t=1}^T \frac{\weight_t^2}{n_t} =1. 
$$
Thus, we can invoke Dudley's entropy integral inequality (see, e.g., Corollary 13.2 in \citep{boucheron2013concentration}) to get
$$
	\bbE[\sup_{g\in\mathscr{G}}M_g - M_{\bar g}~|~\{\bsz_{ti}\}] \lesssim \int_0^1 \sqrt{\log\calN(\mathscr{G}; \sfd; \ep)} d\ep, 
$$
where $\bar g = (\bar\phi^\weights, \{\bar f_t^\weights\})$, and $\calN(\mathscr{G}; \sfd; \ep)$ is the $\ep$-covering number of $\mathscr{G}$ with respect to the pseudometric $\sfd$. Taking expectation over the randomness in $\{\bsz_{ti}\}$, we get
\begin{equation}
	\label{eq:err_learn_decoder_dudley}
	\bbE[G(\{\bsz_{ti}\})] \lesssim N_{\weights}^{-1/2} \cdot \int_0^1 \sqrt{\log \calN(\mathscr{G}; \sfd; \ep)} d\ep.
\end{equation}	
We now bound the covering number of $\mathscr{G}$. To do so, we define
$$
	\bbQ:= \sum_{t=1}^T N_\weights \cdot \frac{\weight_t^2}{n_t} \cdot \sum_{i=1}^{n_t} \frac{\delta_{\bsx_{ti}}}{n_t}, \qquad \bbQ_t := \frac{1}{n_t} \sum_{i=1}^{n_t}  \delta_{\bsx_{ti}},
$$
where $\delta_{\bsx_{ti}}$ is a point mass at $\bsx_{ti}$.
Let $\{\phi^{(1)}, \hdots, \phi^{(N_\ep)}\}\subset \Phi$ be an $\ep$-covering of $\Phi$ with respect to $L^2(\bbQ)$, where $N_\ep = \calN(\Phi; L^2(\bbQ); \ep)$. This means that for any $\phi \in \Phi$, there exists $j\in \{1, \hdots, N_\ep\}$ such that
$$
	\|\phi - \phi^{(j)}\|_{L^2(\bbQ)}^2 := \sum_{t=1}^T\sum_{i=1}^n N_\weights \cdot \frac{\weight_t^2}{n_t^2} \|\phi(\bsx_{ti}) - \phi^{(j)}(\bsx_{ti})\|^2 \leq \ep^2.
$$
Now for each $j\in\{1, \hdots, N_\ep\}$ and $t\in\{1, \hdots, T\}$, let $\{f_t^{(j, 1)}, \hdots, f_t^{(j, N^{(j)}_\ep)}\}$ be an $\ep$-covering of $\calF$ with respect to $L^2(\phi^{(j)}\# \bbQ_t)$, where $\phi^{(j)}\# \bbQ_t$ is the pushforward of $\bbQ_t$ by $\phi^{(j)}$, and $N^{(j)}_\ep = \calN(\calF; L^2(\phi^{(j)}\# \bbQ_t); \ep)$ has no dependence on $t$ due to the uniform entropy control from Assumption \ref{assump:unif_entropy}. This means that for any $f_t\in\calF, j\in\{1, \hdots, N_\ep\}$, there exists $k\in\{1, \hdots, N^{(j)}_\ep\}$ such that
$$
	\|f_t - f_t^{(j, k)}\|_{L^2(\phi^{(j)}\# \bbQ_t)}^2 := \frac{1}{n_t} \sum_{i=1}^{n_t} \bigg(f_t \circ \phi^{(j)}(\bsx_{ti}) - f_t^{(j, k)} \circ \phi^{(j)} (\bsx_{ti})\bigg)^2 \leq \ep^2.
$$
Now, let us fix $(\phi, \{f_t\}_{t=1}^T) \in \mathscr{G}$. By construction, we can find $j\in\{1, \hdots, N_\ep\}$ and $k_t \in \{1, \hdots, N^{(j)}_\ep\}$ for any $1\leq t\leq T$ such that 
\begin{equation}
	\label{eq:err_learn_decoder_covering}
	\|\phi - \phi^{(j)}\|_{L^2(\bbQ)} \leq \ep, \qquad \|f_t - f_t^{(j, k_t)}\|_{L^2(\phi^{(j)}\# \bbQ_t)} \leq \ep, ~~ \forall 1\leq t\leq T.
\end{equation}	
Thus, we have
\begin{align*}
	& \sfd^2\bigg((\phi,\{f_t\}_{t=1}^T), (\phi^{(j)}, \{f_t^{(j, k_t)}\}_{t=1}^T)\bigg) \\
	& = N_\weights \cdot \sum_{t=1}^{T} \sum_{i=1}^{n_t} \frac{\weight_t^2}{n_t^2} \bigg(\ell(f_t\circ\phi(\bsx_{ti}), y_{ti}) - \ell(f_t^{(j, k_t)} \circ \phi^{(j)}(\bsx_{ti}), y_{ti})\bigg)^2\\
	& \leq L_\ell^2 N_\weights \cdot \sum_{t=1}^T \sum_{i=1}^{n_t} \frac{\weight_t^2}{n_t^2}  \bigg(f_t\circ \phi(\bsx_{ti}) - f_t^{(j, k_t)} \circ \phi^{(j)}(\bsx_{ti})\bigg)^2\\
	& = L_\ell^2 N_\weights \cdot \sum_{t=1}^T \sum_{i=1}^{n_t} \frac{\weight_t^2}{n_t^2}  \bigg(f_t\circ \phi(\bsx_{ti}) - f_t\circ \phi^{(j)}(\bsx_{ti}) + f_t\circ \phi^{(j)}(\bsx_{ti}) - f_t^{(j, k_t)} \circ \phi^{(j)}(\bsx_{ti})\bigg)^2\\
	& \leq 2L_\ell^2 N_\weights \sum_{t=1}^{T} \sum_{i=1}^{n_t} \frac{\weight_t^2}{n_t^2} \cdot L_\calF^2 \|\phi(\bsx_{ti}) - \phi^{(j)}(\bsx_{ti})\|^2 + 2 L_\ell^2 N_\weights \sum_{t=1}^T \frac{\weight_t^2}{n_t} \cdot \frac{1}{n_t} \sum_{i=1}^{n_t} \bigg(  f_t\circ \phi^{(j)}(\bsx_{ti}) - f_t^{(j, k_t)} \circ \phi^{(j)}(\bsx_{ti})\bigg)^2 \\
	& = 2L_\ell^2 L_\calF^2  \|\phi - \phi^{(j)}\|_{L^2(\bbQ)}^2 + 2 L_\ell^2 N_\weights \sum_{t=1}^T \frac{\weight_t^2}{n_t} \cdot \|f_t - f_t^{(j, k_t)}\|_{L^2(\phi^{(j)}\# \bbQ_{t})}^2\\
	& \leq 2L_\ell^2 (L_\calF^2 + 1) \ep^2.
\end{align*}
This yields 
\begin{align*}
	\calN(\mathscr{G}; \sfd; \sqrt{2}L_\ell\sqrt{L_\calF^2+1} \cdot \ep) 
	& \leq \bigg| \bigg\{ (\phi^{(j)}, \{f_t^{(j, k_t)}\}_{t=1}^T): 1\leq j\leq N_\ep, 1\leq k_t \leq N_\ep^{(j)}, \forall 1\leq t\leq T\bigg\}\bigg| \\
	& = N_\ep \cdot (N_\ep^{(j)})^T \\
	& \leq \bigg(\frac{C_\Phi}{\ep}\bigg)^{\nu_\Phi} \cdot \bigg(\frac{C_\calF}{\ep}\bigg)^{T\nu_\calF},
\end{align*}
from which we get
\begin{align*}
	\log \calN(\mathscr{G}; \sfd; \ep) & \leq \nu_\Phi \log(C_\Phi L_\ell \sqrt{2(L_\calF^2 + 1)}) + T \nu_\calF \log(C_\calF L_\ell \sqrt{2(L_\calF^2 + 1)}) + (\nu_\Phi + T \nu_\calF) \log(1/\ep) \\ 
	& \lesssim (\nu_\Phi + T \nu_\calF) (1 + \log(1/\ep)).
\end{align*}
Plugging the above inequality to \eqref{eq:err_learn_decoder_dudley}, we get
$$
	\bbE[G(\{\bsz_{ti}\})] \lesssim N_\weights^{-1/2} \cdot \sqrt{\nu_\Phi + T\nu_\calF} \cdot \bigg( 1 + \int_0^1 \sqrt{\log(1/\ep)} d\ep\bigg) \lesssim \sqrt{\frac{\nu_\Phi + T \nu_\calF}{N_\weights}}.
$$
The proof is concluded by plugging the above inequality to \eqref{eq:err_learn_encoder_mcdiarmid}.

\subsubsection{Proof of Lemma \ref{lemma:err_learn_decoder}}\label{prf:lemma:err_learn_decoder}
Since $\hat\phi$ is obtained from the source datasets $\{S_t\}_{t=1}^T$, it is independent of the target data $S_0$. Throughout the proof, we condition on the randomness in the source datasets, thus effectively treating $\hat \phi$ as fixed.
Let $f_{0, \hat \phi} \in \argmin_{f_0\in\calF} \calL_0(\hat \phi, f)$. We start by writing
\begin{align*}
	\calL_0(\hat \phi, \hat f_0) - \calL_0^\star(\hat \phi) 
	& = \calL_0(\hat \phi, \hat f_0) - \hat \calL_0(\hat \phi, \hat f_0) + \hat \calL_0(\hat\phi, \hat f_0) - \hat \calL_0(\hat \phi, f_{0, \hat \phi}) + \hat \calL_0(\hat \phi, f_{0, \hat \phi}) - \calL_0(\hat \phi, f_{0, \hat \phi})\\
	& \leq \calL_0(\hat \phi, \hat f_0) - \hat \calL_0(\hat \phi, \hat f_0) + \hat \calL_0(\hat \phi, f_{0, \hat \phi}) - \calL_0(\hat \phi, f_{0, \hat \phi}) \\
	& \leq \sup_{f_0\in \calF} \calL_0(\hat \phi, f_0) - \hat \calL(\hat \phi, f_0) +  \hat \calL_0(\hat \phi, f_{0, \hat \phi}) - \calL_0(\hat \phi, f_{0, \hat \phi}).
\end{align*}
The right-hand side above is an empirical process indexed by $f_0\in\calF$. Using similar arguments as those appeared in the proof of Lemma \ref{lemma:err_learn_encoder}, we have
$$
	\sup_{f_0\in \calF} \calL_0(\hat \phi, f_0) - \hat \calL(\hat \phi, f_0) +  \hat \calL_0(\hat \phi, f_{0, \hat \phi}) - \calL_0(\hat \phi, f_{0, \hat \phi}) \lesssim \sqrt{\frac{\nu_\calF + \log(1/\delta)}{n_0}}
$$
with probability at least $1-\delta$, which is exactly the desired result.

\subsection{Proof of Theorem \ref{thm:ub_learned_weights}}
The proof bears similarities to the proof of Theorem \ref{thm:ub_fixed_weights}, with additional complications in ensuring a uniform control over the learned weights.
Let $f_{0, \hat \phi} \in \argmin_{f_0\in\calF} \calL_0(\hat \phi, f_0)$.
Note that
\begin{align*}
	& \calL_0(\hat \phi, \hat f_0) - \calL_0(\phi^\star_0 , f_0^\star)  \\
	& = \calL_0(\hat \phi, \hat f_0) - \hat\calL_0^{(2)} (\hat \phi, \hat f_0) 
	+   \hat\calL_0^{(2)} (\hat \phi, \hat f_0) -  \hat\calL_0^{(2)} (\hat \phi, f_{0, \hat \phi})  \\
	& \qquad +  \hat\calL_0^{(2)} (\hat \phi, f_{0, \hat \phi}) -  \calL_0 (\hat \phi, f_{0, \hat \phi}) +  \calL_0^\star (\hat \phi) - \calL_0^\star(\bar\phi^{\hat\weights}) +  \calL_0^\star(\bar\phi^{\hat\weights}) - \calL_0^\star(\phi^\star_0) \\
	\label{eq:ub_learned_weights_decomposition}
	& \leq \calL_0(\hat \phi, \hat f_0) - \hat\calL_0^{(2)} (\hat \phi, \hat f_0) + \hat\calL_0^{(2)} (\hat \phi, f_{0, \hat \phi}) -  \calL_0 (\hat \phi, f_{0, \hat \phi}) +  \calL_0^\star (\hat \phi) - \calL_0^\star(\bar\phi^{\hat\weights}) +  \calL_0^\star(\bar\phi^{\hat\weights}) - \calL_0^\star(\phi^\star_0). \numberthis
\end{align*}
Let $\hat f_t \in \argmin_{f_t\in\calF} \hat\calL_t(\hat\phi, f_t)$.
We then have
\begin{align*}
	& \sum_{t=1}^T \hat \weight_t[\calL_t^\star(\hat \phi) - \calL_t^\star(\bar \phi^{\hat\weights})] \\
	& \leq \sum_{t=1}^T \hat\weight_t [\calL_t(\hat\phi, \hat f_t) - \calL_t(\bar \phi^{\hat\weights}, \bar f_t^{\hat \weights})]\\
	& = \sum_{t=1}^T \hat\weight_t \bigg( \calL_t(\hat \phi, \hat f_t) - \hat \calL_t(\hat\phi, \hat f_t)  +  \hat \calL_t(\hat\phi, \hat f_t) - \hat \calL_t(\bar \phi^{\hat \weights}, \bar f_t^{\hat \weights})+ \hat \calL_t(\bar \phi^{\hat \weights}, \bar f_t^{\hat \weights}) - \calL_t(\bar\phi^{\hat \weights}, \bar f_t^{\hat \weights})\bigg)\\
	& \leq \sum_{t=1}^T \hat\weight_t \bigg( \calL_t(\hat \phi, \hat f_t) - \hat \calL_t(\hat\phi, \hat f_t)  + \hat \calL_t(\bar \phi^{\hat \weights}, \bar f_t^{\hat \weights}) - \calL_t(\bar\phi^{\hat \weights}, \bar f_t^{\hat \weights})\bigg) \\
	& \leq 2 \sup_{\phi\in\Phi, \{f_t\}\subset \calF, \weights \in \calW_\beta} \sum_{t=1}^T \weight_t \bigg( \calL_t(\phi, f_t) - \hat \calL_t(\phi, f_t)  + \hat \calL_t(\bar \phi^{ \weights}, \bar f_t^{\weights}) - \calL_t(\bar\phi^{ \weights}, \bar f_t^{\weights})\bigg) .
\end{align*}
Let $\bsz_{ti} = (\bsx_{ti}, y_{ti})$, and let $\{\tilde \bsz_{ti}\}$ be the source datasets formed by replacing $\bsz_{t', i_{t'}}$ with $\tilde \bsz_{t', i_{t'}}$.
Let 
$$
G(\{\bsz_{ti}\}):= \sup_{\phi\in\Phi, \{f_t\}\subset \calF, \weights \in \calW_\beta} \sum_{t=1}^T \weight_t \bigg( \calL_t(\phi, f_t) - \hat \calL_t(\phi, f_t)  + \hat \calL_t(\bar \phi^{ \weights}, \bar f_t^{\weights}) - \calL_t(\bar\phi^{ \weights}, \bar f_t^{\weights})\bigg).
$$
Then conducting a similar calculation to what led to \eqref{eq:err_learn_encoder_finite_diff}, we get
$$
	|G(\{\bsz_{ti}\}) - G(\{\tilde \bsz_{ti}\})| \leq \frac{4\weight_{t'}}{n_{t'}} \lesssim \frac{\beta}{nT},
$$
where the last inequality is by the fact that $\weights \in \calW_\beta$ implies $\weight_t\leq \beta / T$ for any $1\leq t\leq T$. 
Now, invoking McDiarmid's inequality, we get
\begin{equation}
	\label{eq:err_learn_decoder_sample_split_mcdiarmid}
	G(\{\bsz_{ti}\}) \leq \bbE G(\{\bsz_{ti}\}) + \calO\bigg(\beta \sqrt{\frac{\log(1/\delta)}{nT}}\bigg)
\end{equation}	
with probability at least $1-\delta$. Now, a standard symmetrization argument plus an application of Dudley's entropy integral bound (similar to what led to \eqref{eq:err_learn_decoder_dudley}) gives
\begin{equation}
	\label{eq:err_learn_decoder_sample_split_dudley}
	\bbE[G(\{\bsz_{ti}\})] \lesssim (nT/\beta^2)^{-1/2} \cdot \int_0^1\sqrt{\log \calN(\mathscr{G}; \sfd; \ep)} d\ep,
\end{equation}	
where now $\mathscr{G}:= \{(\phi, \{f_t\}_{t=1}^T, \weights): \phi\in\Phi, \{f_t\}\subset \calF, \weights \in \calW_\beta\}$, and 
\begin{align*}
	& \sfd^2\bigg((\phi, \{f_t\}_{t=1}^T, \weights),(\tilde \phi, \{\tilde f_t\}_{t=1}^T, \tilde\weights) \bigg)  \\
	& \asymp \frac{nT}{\beta^2} \sum_{t=1}^T \sum_{i=1}^{n_t} \frac{1}{n_t^2} \bigg(\weight_t\ell(f_t\circ\phi(\bsx_{ti}), y_{ti}) - \tilde\weight_t \ell(\tilde f_t\circ\tilde\phi(\bsx_{ti}), y_{ti})    \bigg)^2\\
	& = \frac{nT}{\beta^2} \sum_{t=1}^T \sum_{i=1}^{n_t} \frac{\weight_t^2}{n_t^2} \bigg( \ell(f_t\circ\phi(\bsx_{ti}), y_{ti}) - \ell(\tilde f_t\circ\tilde\phi(\bsx_{ti}) \bigg)^2 +  \frac{nT}{\beta^2} \sum_{t=1}^T \sum_{i=1}^{n_t} \frac{1}{n_t^2} (\weight_t - \tilde\weight_t)^2 [\ell(\tilde f_t \circ\tilde \phi(\bsx_{ti}), y_{ti})]^2 \\
	& \lesssim \frac{1}{nT} \sum_{t=1}^T \sum_{i=1}^{n_t}  \bigg( \ell(f_t\circ\phi(\bsx_{ti}), y_{ti}) - \ell(\tilde f_t\circ\tilde\phi(\bsx_{ti}) \bigg)^2 + T^2 \|\weights-\tilde \weights\|^2 ,
\end{align*}	
where the last inequality is by $n_t \asymp n, \weight_t\leq \beta/T$ for any $1\leq t\leq T$ and $\beta \geq 1$.
This means that we can construct a $C \ep$-covering of $\mathscr{G}$ by the following two steps (where $C$ is an absolute constant only depending on $L_\ell$ and $L_\calF$): (1) cover the space $\{(\phi, \{f_t\}_{t=1}^T)\}$ by the same construction as \eqref{eq:err_learn_decoder_covering}; (2) construct an $\ep/{T}$-covering of $\calW_\beta$ with at most $(c{T}/\ep)^T$ many points, where $c$ is an absolute constant. Overall, we can construct a $C\ep$ covering $\mathscr{G}$ with
$
	(C_\Phi/\ep)^{\nu_\Phi} \cdot (C_\calF/\ep)^{T\nu_\calF} \cdot (c{T}/\ep)^{T}
$
many points. Hence, we have
\begin{align*}
	\log\calN(\mathscr{G}; \sfd; \ep) & \lesssim (\nu_\Phi + T\nu_\calF) (\log(1/\ep)+ 1) + T (1+ \log T + \log(1/\ep))  \\
	& \lesssim  (\nu_\Phi + T \nu_\calF) (\log(1/\ep) + 1) + T\log T,
\end{align*}	
where the last inequality is by $\nu_\calF \geq 1$. 
Plugging this inequality back to \eqref{eq:err_learn_decoder_sample_split_mcdiarmid} and \eqref{eq:err_learn_decoder_sample_split_dudley}, we get
\begin{align*}
	\sum_{t=1}^T \hat \weight_t[\calL_t^\star(\hat \phi) - \calL_t^\star(\bar \phi^{\hat\weights})] 
	& \lesssim  \beta \sqrt{\frac{\nu_\Phi + \log(1/\delta)}{nT} + \frac{\nu_\calF + \log T}{n}}.
\end{align*}	
with probability at least $1-\delta$.
This means that under this high probability event, we can invoke the transferability assumption to conclude the existence of a specific $\bar \phi^{\hat\weights}$ with
$$
	 \calL_0^\star (\hat \phi) - \calL_0^\star(\bar\phi^{\hat\weights})  \lesssim C_\rho \beta^{1/\rho} \cdot  \bigg(\frac{\nu_\Phi + \log(1/\delta)}{nT} + \frac{\nu_\calF + \log T}{n}\bigg)^{1/2\rho}.
$$
Recalling \eqref{eq:ub_learned_weights_decomposition}, we arrive at
\begin{align*}
	& \calL_0(\hat \phi, \hat f_0) - \calL_0(\phi^\star_0 , f_0^\star)  \\
	& \leq \calL_0(\hat \phi, \hat f_0) - \hat\calL_0^{(2)} (\hat \phi, \hat f_0) + \hat\calL_0^{(2)} (\hat \phi, f_{0, \hat \phi}) -  \calL_0 (\hat \phi, f_{0, \hat \phi}) +   \sup_{\bar \phi^{\hat\weights}}\calL_0^\star(\bar\phi^{\hat\weights}) - \calL_0^\star(\phi^\star_0) \\
	& \qquad +  \calO\bigg(C_\rho \beta^{1/\rho} \cdot  \bigg(\frac{\nu_\Phi + \log(1/\delta)}{nT} + \frac{\nu_\calF + \log T}{n}\bigg)^{1/2\rho}\bigg) \\
	& \leq \sup_{f_0\in\calF} \bigg\{ \calL_0(\hat\phi, f_0) - \hat\calL^{(2)}_0(\hat\phi,  f_0) + \hat \calL_0^{(2)}(\hat\phi, f_{0, \hat\phi}) - \calL_0(\hat\phi, f_{0, \hat \phi})\bigg\} +\sup_{\bar \phi^{\hat\weights}}\calL_0^\star(\bar\phi^{\hat\weights}) - \calL_0^\star(\phi^\star_0) \\
	& \qquad +  \calO\bigg(C_\rho \beta^{1/\rho} \cdot  \bigg(\frac{\nu_\Phi + \log(1/\delta)}{nT} + \frac{\nu_\calF + \log T}{n}\bigg)^{1/2\rho}\bigg)  
\end{align*}
with probability $1-\delta$. Since $\hat\phi$ is independent of the second batch of target data $\{(\bsx_{0i}, y_{0i}): i\in B_2\}$ and $|B_2|\asymp n_0$, a nearly identical argument as that appeared in the proof of Lemma \ref{lemma:err_learn_decoder} gives
$$
	\sup_{f_0\in\calF} \bigg\{ \calL_0(\hat\phi, f_0) - \hat\calL^{(2)}(\hat\phi,  f_0) + \hat \calL_0^{(2)}(\hat\phi, f_{0, \hat\phi}) - \calL_0(\hat\phi, f_{0, \hat \phi})\bigg\} \lesssim \sqrt{\frac{\nu_\calF + \log(1/\delta)}{n_0}}
$$
with probability at least $1-\delta$. We conclude the proof by invoking a union bound.

\section{Experimental Details and Additional Results}
\label{sec:appx-exp}

\begin{table}[h]
\centering
\scalebox{0.98}{
\begin{tabular}{c||c|c|c||c}
\hline
 \diagbox[font=\footnotesize\itshape]{Learning Paradigm}{Target Task} & PoS & Predicate Detection & NER  & Avg \\ \hline
 Single-Task Learning & 85.06 & 71.51 &  54.96 & 70.51    \\ \hline
  Pre-Training & 88.66 & 78.57 & 58.22 & 75.15 \\ 
 Weighted Pre-Training & {\bf 89.31} *** & {\bf 79.21} *** & {\bf 60.09} *** & {\bf 76.20} \\ \hline
 Joint Training & 87.23 & 74.90 & 59.55  & 73.89  \\ 
 Weighted Joint Training & {\bf 90.71 } *** & {\bf 76.29} *** & {\bf 64.49} *** & {\bf 77.16} \\ \hline
\end{tabular}}
\caption{
Compared to Table \ref{table:results}, more training examples are considered here. Here we only consider the three tasks in Ontonotes 5.0, i.e., PoS tagging, predicate detection, and NER. For each setting, we choose one task as the target task and the remaining two tasks as source tasks. We randomly choose $20K$ training sentences for each source task respectively. As for the target task, we randomly choose $500$, $600$, $1000$ training sentences for PoS tagging, predicate detection, and NER respectively.
}
\label{table:large-results}
\end{table}

\begin{table}[t]
\centering
\scalebox{0.72}{
\begin{tabular}{c||c|c|c||c}
\hline
\diagbox[font=\footnotesize\itshape]{Learning Paradigm}{Setting} & PoS + {\bf NER} & PoS + Chunking + {\bf NER}  & PoS + Chunking + {\bf Predicate Detection} & Avg \\ \hline
 Single-Task Learning &  68.47 & 68.47 &  78.00 & 71.65  \\ \hline
 Joint Training & 65.70 & 67.12 & 79.81 & 70.88 \\ 
 Weighted Joint Training & {\bf 69.58} & {\bf 69.73} & {\bf 80.38} & {\bf 73.23} \\ \hline
\end{tabular}}
\caption{Additional experiments to illustrate the benefits of weighted joint training compared to joint training. For each setting, the task in the bold text is the target task and the remaining tasks are source tasks. The improvement from weighted training in the joint training paradigm under various settings indicates the effectiveness of \NAMEALG{}.
}
\label{table:extra-weighted-joint-training}
\end{table}

\begin{table}[t]
\centering
\scalebox{0.82}{
\begin{tabular}{c||c|c|c|c|c|c|c|c||c}
\hline
\diagbox[font=\footnotesize\itshape]{Learning Paradigm}{Ratio} & 1 & 2  & 4 & 8 & 16 & 32 & 64 & 128 & Avg \\ \hline
 Single-Task Learning & 30.08 & 30.08 & 30.08 & 30.08 & 30.08 & 30.08 & 30.08 & 30.08 & 30.08  \\ \hline
 Normalized Joint Training & 38.55 & 36.53 & 42.37 & 54.18 & 62.08 & 63.58 & 62.85 & 62.33 & 52.81 \\ 
 Weighted Normalized Joint Training & {\bf 42.31} & {\bf 42.13} & {\bf 51.64} & {\bf 59.08} & {\bf 63.46} & {\bf 64.86} & {\bf 64.08} & {\bf 63.65} & {\bf 56.40} \\ \hline
 Upper Bound & 62.69 & 70.31 & 75.65 & 79.39 & 81.59 & 83.19 & 84.18 & - & - \\ \hline
\end{tabular}}
\caption{The performance for the analysis on the ratio between the training sizes of the source and target datasets in Fig. \ref{fig:analysis}. In this part, we use NER as the target task and source tasks include PoS tagging and predicate detection. We keep the training size of the target task as $500$ and change the ratio from $1$ to $128$ on a log scale. Single-task learning denotes learning only with the small target data, and upper bound denotes learning with the target data that has the same size as the overall size of all datasets in cross-task learning. "-" means that we do not have enough training data for the upper bound in that setting. The sampling strategy\protect\footnotemark{} for training examples we used for the analysis is a little different from that used in the experiments in Table \ref{table:results}, so that the performance of single-task learning for the same setting can be a little different.
}
\label{table:ratio}
\end{table}

\begin{table}[t]
\centering
\scalebox{0.92}{
\begin{tabular}{c||c|c|c|c|c|||c}
\hline
\diagbox[font=\footnotesize\itshape]{Learning Paradigm}{Target Size} & 500 & 1000  & 2000 & 4000 & 8000 & Avg \\ \hline
 Single-Task Learning & 30.08 & 54.26 & 68.05 & 74.77 & 79.20 & 61.27 \\ \hline
 Normalized Joint Training (Ratio=8) & 54.18 & 64.69 & 71.33 & 76.95 & 80.37 & 69.50 \\ 
 Weighted Normalized Joint Training (Ratio=8) & {\bf  59.08} & {\bf 66.00} & {\bf 72.30} & {\bf 76.95} & {\bf  80.64} & {\bf 70.99} \\ \hline
 Upper Bound & 79.39 & 81.49 & 83.53 & 84.26 & - & - \\ \hline
 Normalized Joint Training (Ratio=10) & 58.06 & 65.94 & 71.48 & 77.33 & {\bf 80.34} & 70.63 \\ 
 Weighted Normalized Joint Training (Ratio=10) & {\bf 60.96} & {\bf 67.45} & {\bf 72.52} & {\bf 77.44} & 79.94 & {\bf 71.66} \\ \hline
  Upper Bound & 79.70 & 82.32 & 83.39 & 84.84 & - & - \\ \hline
\end{tabular}}
\caption{The performance for the analysis on the training size of the target dataset in Fig. \ref{fig:analysis}. In this part, we use NER as the target task and source tasks include PoS tagging and predicate detection. We keep the ratio between the training sizes of the source and target datasets as $8$ ($10$) and change the training size of the target dataset from $500$ to $8000$ on a log scale. Single-task learning denotes learning only with the small target data, and upper bound denotes learning with the target data that has the same size as the overall size of all datasets in cross-task learning. "-" means that we do not have enough training data for the upper bound in that setting. 
}
\label{table:target-size}
\end{table}

\begin{table}[t]
\centering
\scalebox{0.68}{
\begin{tabular}{|c||c|c|c|c|}
\hline
\diagbox[font=\footnotesize\itshape]{Learning Paradigm}{Target Task} & PoS & Chunking  & Predicate Detection & NER   \\ \hline
 Weighted Pre-Training  & (0.68, 0.01, 0.31) & (0.32, 0.30, 0.37) & (0.33, 0.35, 0.32) &  (0.92, 0.05, 0.04) \\  \hline
\multirow{2}{*}{Weighted Joint Training}  & (0.04, 0.04, 0.03) &  (0.04, 0.04, 0.05) & (0.04, 0.04, 0.05) & (0.04, 0.04, 0.05)  \\
  &  {\bf 0.89} & {\bf 0.87} & {\bf 0.87} & {\bf 0.87}    \\  \hline
\multirow{2}{*}{Weighted Normalized Joint Training}  &  (0.0005, 0.0004, 0.0004) & (0.0005, 0.0005, 0.0004) & (0.002, 0.002, 0.002) & (0.003, 0.003, 0.003)  \\  
  &  {\bf 0.9986} & {\bf 0.9987} & {\bf 0.995} &  {\bf 0.990}  \\  \hline
\end{tabular}}
\caption{The final learned weights on tasks in the experiments in Table \ref{table:results}. For each setting, the final weights on three source tasks are represented by a three-element tuple. The source tasks are always organized in the following order: PoS tagging, chunking, predicate detection, and NER. For example, the tuple (0.68, 0.01, 0.31) for the target task Pos tagging in the weighted pre-training indicate that the final weights on the three source tasks (chunking, predicate detection, NER) are 0.68, 0.01, and 0.31 respectively. As for \NAMEALG{} in the joint training, we have extra weight on the target task, which is bold in the second line. 
}
\label{table:final-weights}
\end{table}

\begin{table}[t]
\centering
\scalebox{0.8}{
\begin{tabular}{c||c|c|c|c||c}
\hline
\diagbox[font=\footnotesize\itshape]{Learning Paradigm}{Target Task} & PoS & Chunking  & Predicate Detection & NER  & Avg \\ \hline
 Weighted Pre-Training (fixed weights) & 49.17 & 73.05 & 74.26 & 39.84 & 59.08  \\ 
 Weighted Pre-Training (dynamic weights) & {\bf 51.17}  & {\bf 73.41} & {\bf 75.77} & {\bf 46.23}  & {\bf 61.64} \\ \hline
 Weighted Normalized Joint Training (fixed weights) & {\bf 86.93} & 90.12 & 74.90 & {\bf 63.60 } & 78.89  \\ 
 Weighted Normalized Joint Training (dynamic weights) &  86.07 & {\bf 90.62}  &  {\bf 76.67} &  63.44  & {\bf 79.20} \\ \hline 
\end{tabular}}
\caption{Comparison between dynamic weights and fixed final weights. There are four tasks in total, PoS tagging, chunking, predicate detection, and NER. For each setting, we choose one task as the target task and the remaining three tasks are source tasks. 
}
\label{table:dynamic-weights}
\end{table}

\begin{figure}[t]
	\centering
    \includegraphics[scale=0.5]{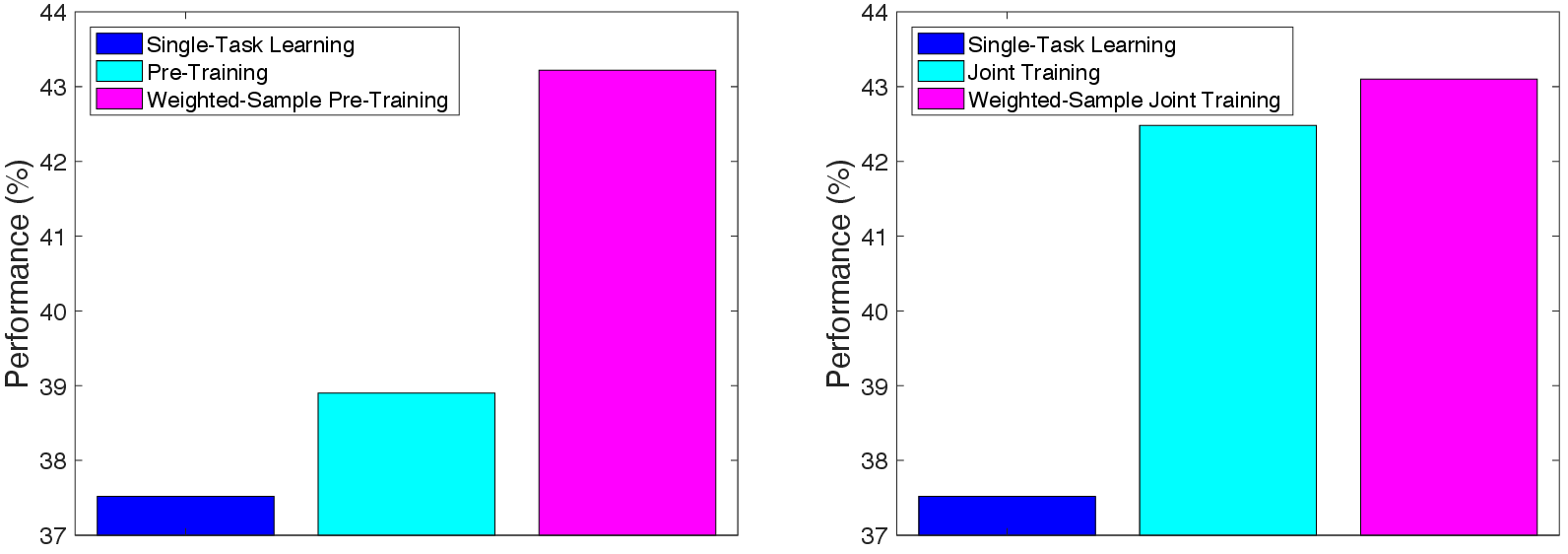}
    \caption{Extension to weighted-sample training. The weighted training algorithm can be easily extended from the weights on tasks to the weights on samples. As for the experiments on weighted-sample training, we use the PoS tagging on entity words as the source task and named entity classification on entity words as the target task. Note that the settings for the weighted-sample training are quite different from those for the weighted-task training in the remaining parts because the weighted-sample training is much more costly compared to the weighted-task training. 
    }
	\label{fig:weighted-sample}
\end{figure}

\begin{figure}[t]
	\centering
    \includegraphics[scale=0.45]{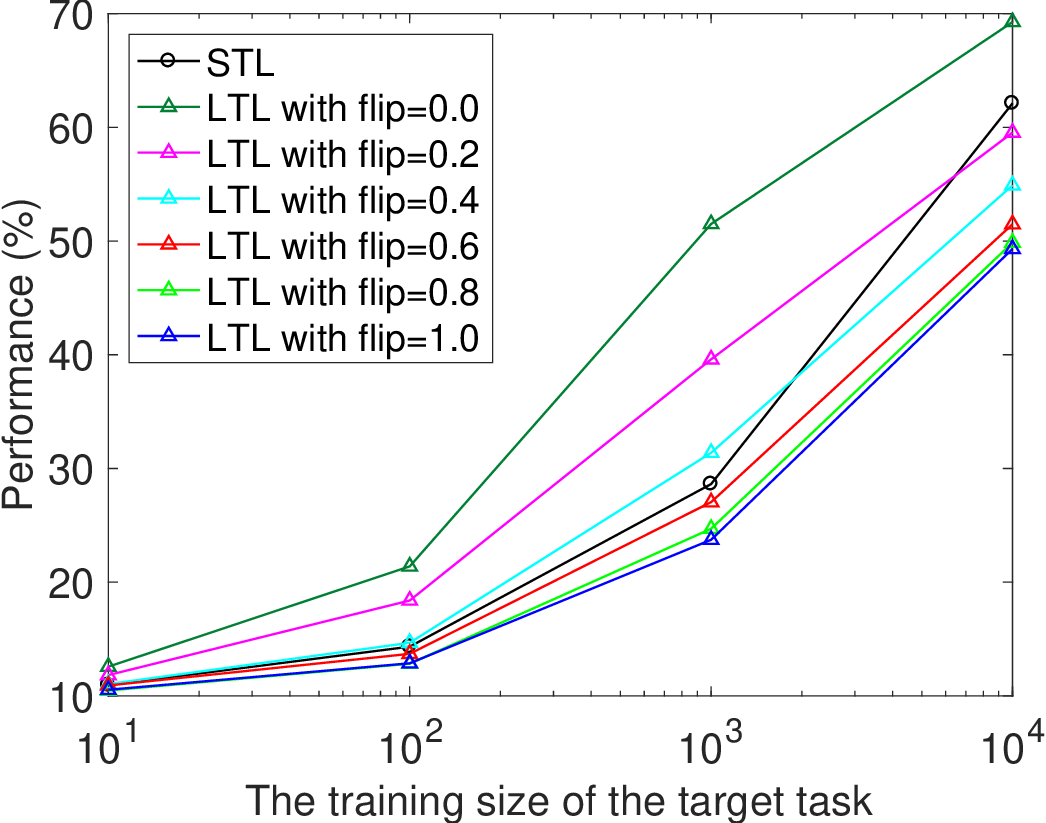}
    \caption{Illustration for the task distance. We find that {\bf the source data is more beneficial when the task distance between the source data and the target data is smaller or the size of the target data is smaller. } In this part, we keep the training size of the source task as $10000$ and change the training size of the target task from $10$ to $10000$ in a log scale. STL denotes the single-task learning only with the target data. LTL denotes the learning to learn paradigm, where we first learn the representations in the source data and then learn the task-specific function in the target data. For the learning to learn paradigm, we consider the source task with different flip rates from $0.0$ to $1.0$, where the flip rate is an important factor in generating the source data and lower flip rate indicates a smaller task distance between the source data and the target data. 
    }
	\label{fig:illustration}
\end{figure}

In this section, we briefly highlight some important settings in our experiments. More details can be found in our released code. It usually costs about half an hour to run the experiment for each setting (e.g. one number in Table \ref{table:results}) on one GeForce RTX 2080 GPU. 

{\bf Data.} In our experiments, we mainly use two widely-used NLP datasets, Ontontes 5.0 \citep{hovy2006ontonotes} and CoNLL-2000 \citep{tjong2000introduction}. Ontonotes 5.0 is a large multilingual richly annotated corpus covering a lot of NLP tasks, and we use the corresponding English annotations for three sequence tagging tasks, PoS tagging, predicate detection, and NER. CoNLL-2000 is a shared task for another sequence tagging task, chunking. There are about $116K$ sentences, $16K$ sentences, and $12K$ sentences in the training, development, and test sets for tasks in Ontonotes 5.0. The average sentence length for sentences in Ontonotes 5.0 is about $19$. As for CoNLL-2000, there are about $9K$ sentences and $2K$ sentences in the training and test sets. The corresponding average sentence length is about $24$.

\footnotetext{Specifically, in the main experiments, we first randomly sample $9K$ sentences and then randomly sample a specific number of sentences (e.g. $500$ sentences for NER) among the $9K$ sentences for the training set of the target task. In the analysis, we first randomly sample $100K$ sentences and then randomly sample a specific number of sentences (e.g. $500$ sentences for NER) among the $100K$ sentences for the training set of the target task. We use a different sampling strategy for training sentences in main experiments and analysis, simply because the largest training size of the data we considered in the two situations is different. }

{\bf Tasks.} We consider four common sequence tagging tasks in NLP, PoS tagging, chunking, predicate detection, and NER. PoS tagging aims to assign a particular part of speech, such as nouns and verbs, for each word in the sentence. Chunking divides a sentence into syntactically related non-overlapping groups of words and assigns them with specific types, such as noun phrases and verb phrases. Predicate detection aims to find the corresponding verbal or nominal predicates for each sentence. NER seeks to locate and classify named entities mentioned in each sentence into pre-defined categories such as person names, organizations, and locations. Based on the above datasets, there are $50$ labels in PoS tagging, $23$ labels in chunking, $2$ labels in predicate detection, and $37$ labels in NER. As for the evaluation metric, we use accuracy for PoS tagging, span-level F1 for chunking, token-level F1 for predicate detection, and span-level F1 for NER.

{\bf The model.} We use BERT as our basic model in our main experiments. Specifically, we use the pre-trained case-sensitive BERT-base PyTorch implementation \citep{wolf2020transformers}. We use the common parameter settings for BERT. Specifically, the max length is $128$, the batch size is $32$, the epoch number is $4$, and the learning rate is $5e^{-5}$. In the BERT model, the task-specific function is the last-layer linear classifier, and the representation model is the remaining part. 

{\bf Cross-task learning paradigms.} In our experiments, we consider two cross-task learning paradigms, pre-training, and joint training. Pre-training first pre-trains the representation part on the source data and then fine-tunes the whole target model on the target data. Joint training uses both source and target data to train the shared representation model and task-specific functions for both source and target tasks at the same time. As for the multi-task learning part in both pre-training and joint training, we adopt the same multi-task learning algorithm as in MT-DNN \citep{liu2019multi}. 

{\bf Experiments with more target training examples.} Compared to the main experiments in Sec. \ref{sec:experiments}, we further experiment with more training examples in the target data for three tasks in Ontonotes 5.0. Without chunking, we also use more training examples in the source tasks. As shown in Table \ref{table:large-results}, we can see that \NAMEALG{} is still beneficial even with more training examples, though the relative improvement of \NAMEALG{} is smaller compared to that with fewer training examples.

{\bf Additional experiments to illustrate the benefits of \NAMEALG{}.} In this part, we conduct additional experiments with \NAMEALG{} on the joint training with fewer source tasks compared to the main experiments in Sec. \ref{sec:experiments}. For simplicity, we make the target task bold to distinguish it from source tasks for each setting. For example, in the setting of PoS + Chunking + \textbf{NER}, NER is the target task and the other two tasks are source tasks. For PoS + {\bf NER}, we randomly sample $20K$ training sentences for PoS tagging and $2K$ training sentences for NER. For PoS + Chunking + {\bf NER}, we randomly sample $20K$, $9K$\footnote{We choose $9K$ training sentences for chunking because the training size of the chunking dataset is $8936$.} and $2K$ training sentences for PoS tagging, chunking, and NER respectively. As for PoS + Chunking + {\bf Predicate Detection}, we randomly sample $20K$, $9K$, and $2K$ training sentences for PoS tagging, chunking, and predicate detection respectively. As shown in Table \ref{table:extra-weighted-joint-training}, we can see that \NAMEALG{} is still beneficial under these diverse settings, which is a complement for our main experiments in Sec. \ref{sec:experiments}.

{\bf Analysis of some crucial factors.} As shown in  Fig. \ref{fig:analysis}, we analyze two crucial factors that affect the improvement of \NAMEALG{}. First, in general, we find that the improvement of \NAMEALG{} in normalized joint training is larger when the ratio between the training sizes of the source and target dataset is smaller, though the largest improvement may not be achieved at the smallest ratio. Note that the improvement vanishes with a large ratio mainly because the baseline (normalized joint training) is already good enough with a large ratio. Second, the improvement from \NAMEALG{} will decrease with the increase of the training size of the target data. In a word, \NAMEALG{} is more beneficial when the performance of the base model is poorer, either with a smaller target data or with a smaller ratio. The specific performance for experiments in our analysis on two crucial factors for \NAMEALG{}, the ratio between the training sizes of the source and target datasets, and the training size of the target dataset, can be found in Table \ref{table:ratio} and Table \ref{table:target-size} respectively. 

{\bf Dynamic weights analysis.} As for experiments in Table \ref{table:results}, there are four tasks in total, PoS tagging, chunking, predicate detection, and NER. For each setting, we choose one task as the target task and the remaining three tasks are source tasks. The learned final weights on tasks are shown in Table \ref{table:final-weights}. To better understand our algorithm, we compare \NAMEALG{} with dynamic weights and \NAMEALG{} with fixed final weights. \NAMEALG{} with dynamic weights is the default setting for our algorithm, where weights on tasks for each epoch are different. As for \NAMEALG{} with fixed weights, we simply initialize the weights as the final weights learned by our algorithm and fix the weights during the training. As shown in Table \ref{table:dynamic-weights}, we find that \NAMEALG{} with dynamic weights (the default one) is slightly better than \NAMEALG{} with final fixed weights. It indicates that fixed weights might not be a good choice for weighted training, because the importance of different source tasks may change during the training. In other words, it might be better to choose the weighted training with dynamic weights, where the weights are automatically adjusted based on the state of the trained model. 

{\bf Extension to the weighted-sample training.} As shown in Fig. \ref{fig:weighted-sample}, \NAMEALG{} can be easily extended from the weights on tasks to the weights on samples. We can see that \NAMEALG{} based on weighted-sample training is also beneficial for both pre-training and joint training. Because the weighted-sample training is much more costly compared to the weighted-task training, we choose a much simpler setting here. In this part, we use the PoS tagging on entity words as the source task and named entity classification on entity words as the target task. Note that both tasks we considered here are word-level classification tasks, i.e., predicting the label for a given word in a named entity. We still use the Ontontes 5.0 \citep{hovy2006ontonotes} as our dataset. There are $50$ labels in PoS tagging on entity words, and $18$ labels \footnote{In NER, we have $37$ labels because each type of entity have two variants of labels (B/I for beside/inside) and one more extra-label $O$ for non-entity words is also considered. } in named entity classification on entity words. There are $37534$ examples in the development set and $23325$ examples in the test set for both source and target tasks. As for the training set, we randomly sample $1000$ examples for the source task and $100$ examples for the target task. As for the model, we use two-layer NNs with $5$-gram features. The two-layer NNs have a hidden size of $4096$, ReLU non-linear activation, and cross-entropy loss. As for the embeddings, we use $50$ dimensional Glove embeddings \citep{pennington2014glove}. The majority baseline for named entity classification on entity words on the test data is $20.17\%$. As for training models, the size of the training batch is $128$, the optimizer is Adam \citep{kingma2015adam} with a learning rate $3e^{-4}$, and the number of training epochs is $200$. As for updating weights on samples, we choose to update the weights every $5$ epoch with the mirror descent in Eq. \ref{eq:mirror_descent} and thus $40$ updates on weights in total. In this part, we approximate the inverse of the Hessian matrix ($\Big[\sum_{t=1}^T \weight_t \nabla_\phi^2 \hat\calL_t(\phi^{k+1}, f_t^{k+1})\Big]^{-1}$ ) in Eq. \ref{eq:task_gradient} by a constant multiple of the identity matrix as in MAML \citep{finn2017model}, and choose the constant as $5$. According to our experiments, the results of this approximation are similar to those of the approximation that we used in Sec. \ref{sec:experiments}. The corresponding results are shown in Fig. \ref{fig:weighted-sample}. In the future, we also plan to group the instances and give each group a weight rather than each sample.

{\bf Settings for simulations on task distance.} In this part, we first randomly generate $10K$ examples $\calD$, where the dimension of the inputs are $1000$ and the corresponding labels are in $0-9$ ($10$ classes). For each dimension of the input, it is randomly sampled from a uniform distribution over $[-0.5, 0.5]$. For each flip rate $q\% \in [0, 1]$, we randomly choose $q\%$ of the data and replace their labels with a random label uniformly sampled from the $10$ classes. The data with flip rate $q\%$ is denoted as $\calD_{q\%}$ (including $\calD_{0.0}$ for the original dataset). For each dataset $\calD_{q\%}$, we train a 2-layer NNs $\calM_{q\%} $with $100\%$ accuracy and almost zero training loss on the dataset. The 2-layer NNs can then be used to generate data for the task $T_{q\%}$. We use the $T_{0.0}$ as the target task and change $q\%$ from $0.0$ to $1.0$ as the source task. Based on the process of the data generation, we can see that the flip rate $q\%$ plays an important role in the intrinsic difference between the source task $T_{q\%}$ and the target task $T_{0.0}$. In general, we can expect that a smaller flip rate $q\%$ indicates a smaller task distance between the source task $T_{q\%}$ and the target task $T_{0.0}$. This data generation process is inspired by the teacher-student network in \citep{hinton2015distilling}. As shown in Fig. \ref{fig:illustration}, we keep the training size of the source task as $10000$ and change the training size of the target task from $10$ to $10000$ in a log scale. For simplicity, we use the same architectures for both the teacher network that we used for generating data and the student network that we used for learning the data, i.e., a two-layer neural network with a hidden size of $4096$, an input size of $1000$, and an output size of $10$. As for training, the size of the training batch is $100$, the optimizer is Adam \citep{kingma2015adam} with learning rate $3e^{-4}$, and the number of training epochs is $100$.

{\bf Details on using existing assets.} As for the corresponding code and pre-trained models for BERT, we directly download it from the Github of huggingface whose license is Apache-2.0, and more details can be found in \url{https://github.com/huggingface/transformers}. As for the Ontonotes 5.0, we obtain the data from LDC. The corresponding license and other details can be found in \url{https://catalog.ldc.upenn.edu/LDC2013T19}. As for the CoNLL-2000 shared task, we download the data directly from the website and more details are in \url{https://www.clips.uantwerpen.be/conll2000/chunking}.

\section{On the Choice of Experimental Settings}
\label{sec:setting-choice}

In this part, we clarify the choice of experimental settings based on the following perspectives.

{\bf Signal selection:} \NAMEALG{} can also be applied to cross-domain or cross-lingual signals. Compared to the above two types of signals, we think cross-task signals are more widespread and more difficult to be used efficiently. Therefore, in our experiments, we choose the most difficult and crucial signals, cross-task signals, to verify the effectiveness of \NAMEALG{}. Another reason is that we didn’t find any existing weighted training algorithms with theoretical guarantees for cross-task learning, while weighted training is common for domain adaptation, such as importance sampling.

{\bf The choice of similar domains:} We choose to select tasks with similar domains. Ideally, \NAMEALG{} can be used for cross-task signals in different domains or languages. However, the gap between different domains or different languages can make cross-task learning more complicated. We instead consider cross-task learning in similar domains of English to simplify the settings. But similar domains didn’t mean that datasets overlap because we randomly sample sentences for different tasks independently without repetition, causing a small ratio of overlap in our case.

{\bf Area selection:} We choose to experiment with NLP tasks because we think cross-task signals are more common in the NLP area and NLP tasks are more diverse compared to tasks in other areas.

{\bf Dataset selection:} We choose Ontonotes 5.0 as our main dataset because it is widely used, and provides large-scale expert annotations (on 2.9 million words) for a wide range of NLP tasks. This enables us to focus on learning with various tasks in similar domains. As for CoNLL-2000, we add it mainly because we want to analyze the impact of chunking on NER.

{\bf Task selection:} On the one hand, sequence tagging is more challenging than classification tasks. On the other hand, the evaluation of sequence tagging is more reliable, compared with generative tasks. Another reason is that Ontonotes 5.0 mainly cover sequence tagging tasks.

{\bf Model selection:} We choose to use BERT in our main experiments because BERT is widely used and all SOTA models are slight improvements of BERT. For weighted-sample training, we instead choose two-layer NNs with 5-gram features because it is simple and fast. Similar results could be shown even if the model is more complex, but exhaustive experimentation is not our goal.

{\bf The choice of the low-resource setting:} There is an intrinsic trade-off between the base performance of single-task learning and the relevant improvement of cross-task learning. Specifically, if the base performance of single-task learning is good enough, adding cross-task signals can introduce extra noise compared to its information. In our experiments, we simply consider a simple low-resource setting where few target examples are available. Actually, we can still see the effectiveness of \NAMEALG{} in cross-task learning as long as the base performance is not too high, but the relative improvement of \NAMEALG{} will be smaller compared to that in the low-resource setting.

\end{document}